\documentclass{article}

\usepackage{microtype}
\usepackage{graphicx}
\usepackage{subcaption}
\usepackage{booktabs} 

\usepackage{hyperref}

\usepackage{tcolorbox}
\usepackage{enumitem}

\usepackage{listings} 
\usepackage{xcolor} 
\newcommand{\ind}{\perp\!\!\!\!\perp} 

\usepackage[accepted]{icml2026}

\usepackage{amsmath}
\usepackage{amssymb}
\usepackage{mathtools}
\usepackage{amsthm}

\usepackage[capitalize,noabbrev]{cleveref}

\theoremstyle{plain}
\newtheorem{theorem}{Theorem}[section]
\newtheorem{proposition}[theorem]{Proposition}

\theoremstyle{definition}
\newtheorem{definition}[theorem]{Definition}

\theoremstyle{remark}

\newcommand{\myendofproof}[0]{\hfill $\blacksquare$ \newline}

\usepackage[textsize=tiny]{todonotes}

\icmltitlerunning{No Need to Train Your RDB Foundation Model}

\begin{document}

\input{def.set}

\twocolumn[
  \icmltitle{No Need to Train Your RDB Foundation Model}



  \icmlsetsymbol{equal}{*}

  \begin{icmlauthorlist}
    \icmlauthor{Linjie Xu}{comp}
    \icmlauthor{Yanlin Zhang}{comp}
    \icmlauthor{Quan Gan}{comp}
    \icmlauthor{Minjie Wang}{comp}
    \icmlauthor{David Wipf}{comp}
  \end{icmlauthorlist}

  \icmlaffiliation{comp}{University of Hong Kong, Shanghai X-Lab}

  \icmlcorrespondingauthor{David Wipf}{davidwipf@gmail.com}

  \icmlkeywords{Machine Learning, ICML}

  \vskip 0.3in
]



\printAffiliationsAndNotice{}  

\vspace*{-0.5cm}
\begin{abstract}
\vspace*{0.2cm}
Relational databases (RDBs) contain vast amounts of heterogeneous tabular information that can be exploited for predictive modeling purposes.  But since the space of potential targets is vast across enterprise settings, how can we \textit{avoid retraining} a new model each time we wish to predict a new quantity of interest?  Foundation models based on in-context learning (ICL) offer a convenient option, but so far are largely restricted to single-table operability.  In generalizing to multiple interrelated tables, it is essential to compress variably-sized RDB neighborhoods into fixed-length ICL samples for consumption by the decoder.  However, the details here are critical: unlike existing supervised learning RDB pipelines, we provide theoretical and empirical evidence that ICL-specific compression should be constrained \emph{within} high-dimensional RDB columns where all entities share units and roles, not \textit{across} columns where the relevance of heterogeneous data types cannot be determined without extensive label information.  Conditioned on this restriction, we then demonstrate that encoder expressiveness is actually not compromised by excluding trainable parameters.  Hence we arrive at a principled family of RDB encoders that can be seamlessly paired with already-existing single-table ICL foundation models, whereby no training or fine-tuning is required.  From a practical standpoint, we develop scalable SQL primitives to implement the encoder stage, resulting in the easy-to-use open-source \textit{RDBLearn} foundation model  capable of robust performance on unseen datasets out of the box.

\end{abstract}
\vspace*{-0.7cm}

\section{Introduction}
\vspace*{-0.1cm}

Foundation models for tabular data, capable of handling new predictive tasks \textit{without retraining}, are increasingly prevalent \cite{hollmann2025accurate,jingangtabicl,zhang2025mitra,zhang2025limix}.  When predicated upon some form of in-context learning (ICL), these models push labeled instances from a previously unseen dataset through a single forward pass of a pre-trained Transformer architecture, and then output predictions at one or more user-specified testing points.  Despite their promising performance thus far, these existing \textit{single-table} foundation models do not address wide-ranging enterprise relational databases (RDBs) involving multiple inter-connected tables \cite{garcia2009database}.  For example, on e-commerce platforms candidate RDB prediction targets may cover future product purchases \cite{amazon-reviews}, customer retention \cite{acquire-valued-shoppers-challenge}, click-through rates \cite{outbrain-click-prediction,retailrocket}, user churn \cite{amazon-reviews}, or charge/pre-payment attributes \cite{motl2015ctu}. As reliance upon such capabilities continues to grow, moving beyond single-table solutions represents an important yet under-served frontier facing the ML community \cite{gan2024graph}; see Appendix \ref{app:related_work} for further details.

To accommodate such RDB use cases, intuition suggests that a rich parameterized encoder is paramount, converting variably-sized RDB neighborhoods (possibly expressed as subgraphs), into dense fixed-length embeddings that mirror single-table settings.  Indeed, most existing RDB predictive models trained via \textit{per-dataset supervision} operate more-or-less in this fashion \cite{dwivedi2025relational,robinson2024relational,wang20244dbinfer}, with a graph neural network (GNN) or Transformer-related architecture serving as the de facto encoder.   These approaches have  been shown to outperform alternatives based on combining parameter-free multi-table feature aggregation with trainable single-table prediction heads \cite{wang20244dbinfer,zhang2023rdbench}.

However, as we will argue both theoretically and empirically herein, what is natural in the supervised learning regime \textit{need not necessarily transition to success in distinct ICL-based RDB foundation models}.  The high-level rationale  is that a dense encoder representation can interfere with the original column space of RDB tables, conflating units, roles, and levels of useful information.  When a supervision signal is present this need not be problematic, since the encoder can simply learn to first prune away useless dimensions specific to a given dataset.  But prior to seeing ICL samples within an RDB foundation model, a necessarily \textit{fixed} encoder is incapable of adjudicating column roles, which may vary from dataset to dataset (or even task to task within a given RDB; see Figure \ref{fig:ambiguity}).  Hence we advocate for RDB encoders that only compress vertically \textit{within} high-dimensional columns (where shared units facilitate interpretable aggregation), not horizontally \textit{across} columns, where relevance is largely indeterminate without sufficient label information.  Importantly, conditioned on this restriction to vertical compression, we formalize how a \textit{parameter-free} encoder results in a minimal loss of expressiveness.  Hence we can directly pair this class of encoder with the most powerful existing single-table foundation models \textit{with no training required}.  This leads to our key contributions:
\vspace*{-0.1cm}
\begin{itemize}
    \item We define a restricted class of RDB encoder that explicitly preserves column identities and interpretable information flow to a single-table ICL-prediction head.   Within this class, we establish that encoder expressiveness is not compromised by excluding trainable parameters. This facilitates direct compatibility with the best existing single-table foundation models.

    \item We quantify how RDB encoders \textit{outside} of the proposed class can provably increase estimation error and/or sample complexity when uninformative feature columns are present in a database.

    \item Using scalable SQL primitives to implement the encoder stage, we introduce \textit{RDBLearn}, an easy-to-use open-source RDB foundation model capable of handling completely new datasets out of the box with no training or fine-tuning whatsoever.  The performance exceeds existing alternatives, including a non-reproducible, closed-source industry model that has been trained with access to unknown real-world datasets.  We also include ablations to support the conceptual underpinnings of the proposed pipeline.
\end{itemize}

\vspace*{-0.1cm}

We remark that parameter-free RDB encoding methods have been proposed in the past for engineering supervised predictive pipelines \cite{kanter2015deep,Kramer2001,zahradnik2023deep}.  What differentiates our contribution is that we are not defaulting to such methods merely as a simplifying heuristic as in prior work.  Instead, we are rigorously examining \textit{why} a suitable family of these parameter-free encoders may actually be \emph{preferred} when the goal is ICL over unseen RDBs, particularly those with task-dependent partitions of useful and useless columns.

\vspace*{-0.1cm}
\section{RDB Predictive Modeling} 
\vspace*{-0.0cm}

In a canonical supervised learning setting, we are given training data $\calD = \{\bX,\by \} \equiv \{\bx_{i:},y_i \}_{i=1}^{n}$ with instance feature rows $\bx_{i:} \in \calX^{d}$ and corresponding instance labels $y_i \in \calY$ for all $i$.  The goal is then to learn a parameterized model $f_\theta$ such that $f_\theta(\bx_{\tiny \mbox{test}}) \approx y_{\tiny \mbox{test}}$ at any test point $\{\bx_{\tiny \mbox{test}}, y_{\tiny \mbox{test}} \}$, where in practice $y_{\tiny \mbox{test}}$ is unknown.  \textit{RDB predictive modeling} generalizes the above via the inclusion of an additional set of auxiliary data tables $\calT =\{\bT^k \}_{k=1}^K$, where  $\bT^k \in \mathfrak{T}^{n_k \times d_k}$ denotes the $k$-th table associated with a given entity type.  Each table row corresponds with a single instance of that entity (e.g., an individual user), and the columns encode instance attributes (e.g., elements of a user profile). These attributes are generally heterogeneous in nature, often consisting of continuous or discrete numerical values, categorical fields, text fragments, or temporal information. Note that without loss of generality, and for notational convenience later, we also assert that $\bT^K \equiv \bX$.

To complete its specification and make use of these auxiliary tables for making predictions, an RDB also includes a set of relations $\calR = \{ \bF^k, \bp^k \}_{k=1}^K$.  Here each $\bp^k \in \calP^{n_k \times 1}$ denotes a primary key (PK) column, with elements uniquely identifying the rows of $\bT^k$.  Meanwhile, each column $\boldf_{:j}^k$ of $\bF^k$ represents a foreign key (FK), whose elements are all given by values in some fixed PK column it references.  In this way, the domain of every FK $\boldf_{:j}^k$ is some $\bp^{k'}$.

\vspace*{-0.0cm}
\subsection{A Generic RDB Supervised Learning Pipeline} \label{sec:rdb_predictions}
\vspace*{-0.0cm}

A generic RDB predictive modeling pipeline predicated on \textit{supervised learning} can be executed via the following steps:

\vspace*{-0.2cm}

\begin{enumerate}
    \item Convert RDB $\{\calT,\calR \}$ as defined above to a heterogeneous graph $\calG$.  There are multiple ways of doing so \cite{wang20244dbinfer}, but the most common involves treating each table $\bT^k$ as a node type and each row $i$ within a given $\bT^k$ as a node with features $\bt^k_{i:}$.  We then form directed edges using each FK column $\boldf_{:j}^k$ and PK column $\bp^{k'}$ it points to within $\calR$. This approach is widely adopted \cite{cvitkovic2020supervised,dwivedi2025relational,fey2023relational,zhang2023gfs,zhang2023rdbench}.
    
    \item Based on $\calG$, sample $H$-hop subgraphs or ego-networks $\calG_H(\bx_{i:})$ that are centered at each target row $\bx_{i:}$ within $\bX$.  For temporal RDBs, sampling should exclude nodes with time-stamps later than $\bx_{i:}$.

    \item Independent of the original RDB or individual subgraph sizes, compute \textit{fixed-length} embeddings $\bz_{i:} = g_\phi[ \calG_H(\bx_{i:}) ] \in \mathbb{R}^{d_z}$, where the \textit{encoder} $g_\phi$ is some form of GNN or Transformer architecture.

    \item Stack embeddings to form the revised $\calD = \{\bZ,\by \} \equiv \{\bz_{i:},y_i \}_{i=1}^{n}$ and train end-to-end by minimizing the supervised loss 
    \vspace*{-0.3cm}
    \begin{equation} \label{eq:SL_loss}
      \calL^{\tiny \mbox{SL}}(\theta,\phi) = \sum_{i=1}^n -\log q_\theta \Big(y_i | g_\phi\big[ \calG_H(\bx_{i:}) \big]\Big)
          \vspace*{-0.2cm}
    \end{equation}
    over parameters $\phi$ from the encoder and $\theta$ from a suitable prediction head/decoder $q_\theta$.

    \item At inference time, update $\calG$ to reflect any new collected data, including new unlabeled test rows of $\bX = \bT^K$, form the new subgraph $\calG_H(\bx_{\tiny \mbox{test}})$, and then compute $q_\theta\big(y_{\tiny \mbox{test}} | g_\phi \big[ \calG_H(\bx_{\tiny \mbox{test}}) \big]\big)$ for making predictions of $y_{\tiny \mbox{test}}$.
    
\end{enumerate}

\subsection{RDB Foundation Models}

Moving beyond the supervised learning setting of the previous section, the goal of \textit{RDB foundation models} is to retain applicability across multiple RDBs, with minimal or no retraining required for each new predictive task.  In addition to direct LLM-prompting approaches \cite{wydmuch2024tackling}, we discuss two notable possibilities that involve an explicit pre-training step over multiple RDBs.

\vspace*{-0.1cm}

\paragraph{Schema-agnostic models.}  Provided the encoder $g_\phi$ is designed to digest a broad spectrum of input RDB schema (across entity types and associated features) using a shared representational form, then it is possible to simultaneously train $q_\theta$ and/or $g_\phi$ over multiple real-world RDBs \cite{ranjan2025relational,wanggriffin,wu-etal-2025-large}.  The resulting model can, at least in principle, be applied to new unseen RDBs without retraining, or perhaps more realistically, with modest fine-tuning for any given task.

\vspace*{-0.1cm}

\paragraph{ICL-based models.}  Building on the growing development of models exploiting ICL for single-table data \cite{hollmann2025accurate,jingangtabicl,zhang2025limix}, it is natural to consider extensions to multi-table RDBs using a graph encoder $g_\theta$ as adopted in previously-introduced supervised learning models \cite{fey2025kumorfm}.  Expanding on step 4 from Section \ref{sec:rdb_predictions}, the ICL multi-table training objective becomes
\begin{eqnarray} \label{eq:RDB_ICL_pretrain_loss}
   && \hspace*{-0.5cm} \calL^{\tiny \mbox{ICL}}(\theta,\phi) = \mathbb{E}_{ \calT,\calR \sim p(\calT,\calR) } \Big[ -\log q_\theta\Big(y_{\tiny \mbox{test}} |  \bz_{\tiny \mbox{test}}, \calD \Big)  \Big] \nonumber \\
   && \mbox{with}  ~~ \calD = \{\bz_{i:}, y_i \}_{i=1}^n, ~~\bz = g_\phi[ \calG_H(\bx) ].
\end{eqnarray}
Unlike in (\ref{eq:SL_loss}), the revised decoder module $q_\theta$ now consumes a set of labeled ICL samples $\{\bz_{i:}, y_i \}_{i=1}^n$ as well as a test point $\bz_{\tiny \mbox{test}} = g_\phi[ \calG_H(\bx_{\tiny \mbox{test}}) ]$, and is charged with predicting $y_{\tiny \mbox{test}}$; see prior work for background justification of forming $q_\theta$ in this way for single tables \cite{hollmann2022tabpfn,nagler2023statistical}.  All the requisite quantities are extracted from RDBs sampled during pre-training, which may be synthetically generated from some distribution $p(\calT,\calR)$  and/or combined with available real-world RDBs.  Synthetic generation has the advantage of unlimited volume, potentially as an extension of single-table synthetic-generation training pipelines already in common use \cite{hollmann2025accurate,jingangtabicl,zhang2025limix}.  In contrast, real-world RDB data with wide coverage is relatively difficult to collect, as most sources are private within enterprises.

At inference time, a \textit{completely new} RDB $\{\calT,\calR\}$ is provided, including one or more unlabeled test query points $\bx_{\tiny \mbox{test}}$ within $\bX = \bT^K$.  Critically, \textit{no further per-dataset training occurs}; instead, $\calD$ and $\bz_{\tiny \mbox{test}}$ are computed as in (\ref{eq:RDB_ICL_pretrain_loss}) using encoder $g_\phi$, and then we form our final estimator as $q_\theta\big(y_{\tiny \mbox{test}} |  \bz_{\tiny \mbox{test}}, \calD \big)$.  In practice, the latter amounts to pushing $\bz_{\tiny \mbox{test}}$ and the ICL samples within $\calD$ through a single forward pass of a Transformer architecture used for instantiating $q_\theta$.

\vspace*{-0.2cm}
\subsection{Limitations}
\vspace*{-0.1cm}

Thus far schema-agnostic models \cite{ranjan2025relational,wanggriffin,wu-etal-2025-large} have only been narrowly applied within small sets of RDBs for which varying degrees of pre-training and/or fine-tuning were applied.  Hence their applicability outside of this regime on fundamentally different RDB types remains uncertain. Meanwhile for pure ICL-based models predicated on synthetic generation during pre-training, no existing open-source frameworks are actually available for transparent evaluation. We only have the unpublished \textit{closed-source} KumoRFM approach \cite{fey2025kumorfm} with key details missing; see Appendix \ref{app:related_work}.

\vspace*{-0.2cm}
\section{A Foundation for RDB Foundation Models} \label{sec:new_foundation}

Given the established track record of ICL-based \textit{single-table} foundation models, we intend to push similar principles into the more complex \textit{multi-table} regime.  In this regard, we will adopt an ICL-based prediction head $q_\theta$, where ICL samples within a given dataset share a fixed dimension as in prior work.  But we depart from existing foundation models in how we design our encoder $g_\phi$.  

\begin{figure}[h]
\centering
\includegraphics[width=1.0\columnwidth]{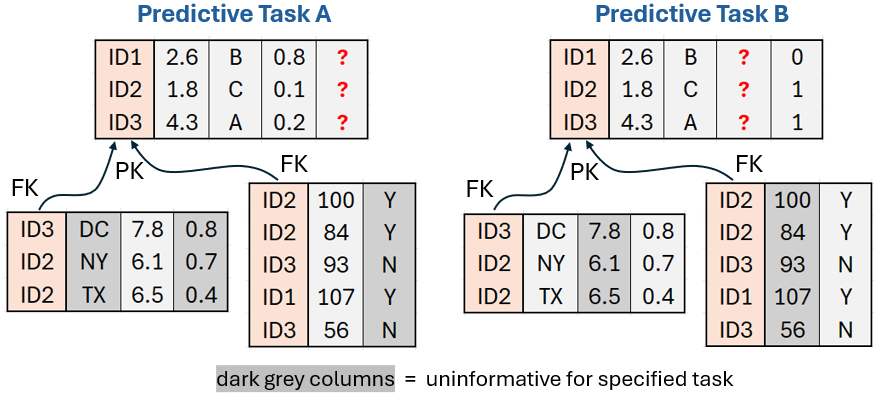}
\caption{Example RDB (w/ $K=3$ tables) where task-dependent column importance cannot be determined at the encoder stage.  ICL samples are needed to resolve the intrinsic column ambiguity.}
\label{fig:ambiguity}
\end{figure}

\vspace*{-0.2cm}
\subsection{An RDB Encoder Specifically for ICL} \label{sec:rdb_encoder_intro}
\vspace*{-0.1cm}

Our guiding principle for RDB encoder design is as follows:

\begin{tcolorbox}[width=\linewidth, colback=white!95!black]
There is no free lunch; RDB subgraphs may be large so any encoder \textbf{must} compress somewhere.  Our insight is that at the encoder level, prior to seeing labeled ICL samples, we should only compress along the vertical dimension \textbf{within table columns}, where shared units facilitate interpretable aggregation guided by known PK-FK relations.  Meanwhile, the encoder should \textbf{not} compress in the horizontal dimension \textbf{across table columns}, where roles, data types, and predictive relevance are broadly different and yet indeterminate without adequate per-dataset/task label info. See Figure \ref{fig:ambiguity} illustration.
\end{tcolorbox}

We formalize these considerations through a particular encoder definition targeting ICL application. 

\begin{definition} \label{def:juice}
We define a \textit{JUICE} encoder function $g_{\tiny \mbox{juice}}$ (for ``\underline{j}ust \underline{u}se \underline{i}ntra \underline{c}olumn \underline{e}ncodings'') if there exists a function $f_\calR$, dependent on $\calR$, such that the following hold: 
\vspace*{-0.2cm}
\begin{enumerate}
    \item $\bz_{i:}  \equiv \{z_{il} \}_{l=1}^{d_z} = g_{\tiny \mbox{juice}}[\calG_H(\bx_{i:})] ~\mbox{for all rows of } \bX;$
    \item For each $l \in \{1,\ldots,d_z\}$ there is a $k \in \{1,\ldots,K\}$ and $j \in \{1,\ldots,d_k \}$ such that 
    \vspace*{-0.1cm}
    \begin{equation}
        \bz_{:l} = f_{\calR}\left(\bt^k_{:j} \right) ~~\mbox{and} ~~ \bz_{:l} \ind \calT \backslash \bt^k_{:j}.
    \end{equation}
\end{enumerate}
\vspace*{-0.3cm}
\end{definition}
Per this definition, if we convert each RDB subgraph to a fixed-length representation $\bz_{i:}$ using JUICE, every column $\bz_{:l}$ in the stacked representation matrix $\bZ$ is a function of just \textit{a single tabular data column from the original RDB}.  Consequently, each $\bz_{:l}$ reflects the identity and functional role of a single RDB column, and no other.  Arbitration of which such columns are actually important to prediction accuracy is then deferred to $q_\theta$, where access to ICL samples and labels facilitates informed decision making.

\vspace*{-0.1cm}
\paragraph{Caveats.} Before proceeding to JUICE implementation details, two notable qualifications are in order regarding the underlying motivation.  Both of these address scenarios that might otherwise challenge strict reliance on Definition \ref{def:juice}:
\vspace*{-0.1cm}
\begin{itemize}
    \item There are other entry points for constructively accessing label information outside of an ICL-based prediction head.  For example, limited historical labels exist within the subgraphs used as encoder input to certain RDB foundation models \cite{ranjan2025relational}. However, unlike the full range of ICL samples, these \textit{intra-subgraph} labels are restricted to localized RDB context (e.g., past instances of the same target entity) and need not be sufficient for differentiating uninformative features.  We defer a rigorous treatment of this topic to future work.

    \vspace*{-0.1cm}
    \item Herein we are implicitly assuming that external RDB descriptions and meta-data are unavailable for gauging column importance.  This assumption facilitates operation even in ambiguous regimes without comprehensive annotations.  But of course when such data are present, there is now a rich literature supporting the application of LLM agents as feature selectors/engineers \cite{han2024large,zhang2024elf}, and reasonable featurization candidates may naturally deviate from Definition \ref{def:juice} (as an extreme illustrative example, consider an RDB task with attendant text explicitly stating that target labels are equal to an aggregation of the product of multiple columns from a joined table). Concurrent work even extends related ideas specifically to RDB predictive modeling \cite{kim2026refuge}. 
    
\end{itemize}

\vspace*{-0.3cm}
\subsection{A 1D GNN Implementation}
\vspace*{-0.1cm}

We now address practical ways to actually construct JUICE embeddings.  The high-level idea to first split a given $\calG_H(\bx_{i:})$ into separate 1D subgraphs associated with each tabular column dimension, and then apply a specialized GNN to these subgraphs independently. 

\vspace*{-0.1cm}
\paragraph{1D column-wise subgraph formation.}  For each tabular data column $\bt^k_{:j}$ we form the revised subgraph $\calG_H^{k,j}(\bx_{i:})$, which is equivalent to $\calG_H(\bx_{i:})$ but with truncated node features dependent only on $\bt^k_{:j}$.  Specifically, for nodes extracted from any arbitrary table $\bT^{k'}$, the original $d_{k'}$-dimensional row features are simply reduced to a 1D feature via
\vspace*{-0.2cm}
  \begin{equation} \label{eq:1D_subgraph}
  \begin{array}{lll}
    \bt_{i':}^{k'} \rightarrow t^k_{i'j} & \forall i' & \mbox{if}~~k' = k \\
    \bt_{i':}^{k'} \rightarrow ~0 & \forall i' & \mbox{if}~~k' \neq k.
    \end{array}
    \vspace*{-0.1cm}
\end{equation}  
By design, any encoder applied to the revised subgraph $\calG_H^{k,j}(\bx)$ will only depend on column $\bt^k_{:j}$, \textit{retaining independence from all other RDB columns} as desired.  We describe a general procedure for such encoder construction next.

\vspace*{-0.1cm}
\paragraph{Meta-path GNN layers.}

Although we have simplified our subgraphs via (\ref{eq:1D_subgraph}), each $\calG_H^{k,j}(\bx_{i:})$ nonetheless retains rich heterogeneous relationships through different PK-FK pairs within $\calR$.  These pairs determine a set of multi-relational meta-paths \cite{ferrini2024meta} between a target node defined by a row $\bx_{i:}$, and the nodes associated with rows of other tables within a $H$-hop radius.\footnote{A meta-path $\rho_H$  is a sequence of $H$ edges  connecting properly-typed nodes in $\calG_H^{k,j}(\bx_{i:})$.  For example, on an e-commerce platform we could have ~ $\textit{user} \xrightarrow{\text{purchased}} \textit{product} \xrightarrow{\text{sold\_by}} \textit{seller}.$
}  Analogous to tabular data columns with varying relevance, prior to seeing ICL samples the encoder cannot possibly know which meta-paths are most discriminative for a given predictive task.

A typical heterogeneous GNN \cite{busbridge2019relational,hu2020heterogeneous,schlichtkrull2018modeling} would naturally interleave all of these meta-paths together in forming predictions. But this can be counter-productive outside of the supervised learning setting, where back-propagated gradients reflecting supervision labels are available to selectively discount the less important meta-paths on a dataset-by-dataset basis.  Fortunately though, we can instead encode \textit{separate} 1D representations associated with each meta-path using a meta-path GNN.  Subsequently $q_\theta$ equipped with ICL samples can determine relevance, as candidate meta-paths remain conveniently column-aligned in the augmented feature space.

Given a length-$H$ meta-path $\rho_H$ extracted from $\calG_H^{k,j}(\bx_{i:})$, we initialize all 1D input-layer node embeddings denoted $\bmu^{(0)}$ using (\ref{eq:1D_subgraph}).   From there, embedding updates \cite{ferrini2024meta} for a node $v$ on layer $h+1$ are given by
\vspace*{-0.2cm}
\begin{equation} \label{eq:MP-GNN_layer}
   \mu_v^{(h+1)}  = \sigma \left( w^{(h)}_0 \mu_v^{(h)} + \mbox{agg}\left[\left\{w^{(h)} \mu_u^{(h)} \right\}_{u \in \calN^{(h)}_v} \right]   \right),
   \vspace*{-0.1cm}
\end{equation}
where $\mbox{agg}$ is a permutation invariant aggregation function (e.g., sum, mean) and $\sigma$ is an activation (e.g., linear, leaky-ReLU, etc.).  Additionally, $\calN^{(h)}_v$ denotes the set of neighbors of node $v$ according to the relation or edge type associated with step $h$ along $\rho_H$ \cite{ferrini2024meta}.  We also assume meta-paths traverse each table at most once, before converting output layer embeddings to elements of $\bZ$.
\vspace*{-0.1cm}

\paragraph{Resulting JUICE encoder design.}  Although not the most computationally efficient way to implement $g_{\tiny \mbox{juice}}$, we now summarize the constituent steps from a conceptual standpoint for transparency (full implementation details and complexity considerations will be addressed in Section \ref{sec:implementation}):
\vspace*{-0.2cm}
 \begin{enumerate}
    \item Given a new RDB, select a row $\bx_{i:}$ from $\bX \equiv \bT^K$ and initialize $\bz_{i:}$ to $\bx_{i:}$.
    \item Extract $\calG_{H}(\bx_{i:})$ as in prior work on supervised RDB predictive models.
    \item Choose a tabular data column $\bt^k_{:j}$ from within an $H$-hop radius of $\bx_{i:}$ over $\calG_{H}(\bx_{i:})$, and convert $\calG_{H}(\bx_{i:})$ to  $\calG_H^{k,j}(\bx_{i:})$ using (\ref{eq:1D_subgraph}).
    \item From within $\calG_H^{k,j}(\bx_{i:})$, select a meta-path $\rho_H$ between the node associated with row $\bx_{i:}$ and the rows in table $k$, and propagate node embeddings using (\ref{eq:MP-GNN_layer}).
    \item Collect the 1D node embeddings associated with $\bx_{i:}$ from the output layer and concatenate to $\bz_{i:}$.
    \item Repeat the above, looping over meta-paths and data columns within $H$-hops of $\bx_{i:}$.
    \item Output final embedding $\bz_{i:} = g_{\tiny \mbox{juice}}[\calG_H(\bx_{i:})]$.
\end{enumerate}
\vspace*{-0.2cm}

If we compute JUICE embeddings for all rows of $\bX$ as described above, the resulting $\bZ$ will necessarily satisfy Definition \ref{def:juice}.  In fact, the interpretable column-wise information partitioning is even stronger:  Each column of $\bZ$ so-constructed is actually a function of a single auxiliary data column \textit{and} a single meta-path connecting the target node with nodes within this data column.  This organization dramatically simplifies the ICL stage, which need not disentangle useful and useless dimensions that have been nonlinearly coupled through a traditional dense encoder.

\vspace*{-0.2cm}
\paragraph{Additional JUICE variations.}  There exist other channels for enhancing JUICE expressiveness as well.  Specifically, we may execute (\ref{eq:MP-GNN_layer}) with multiple different $\operatorname{agg}$ functions and concatenate the results to obtain a richer set of column-aligned representations.  Depending on the task, different aggregations may be useful in sorting out various forms of homophily versus heterophily network effects (see Appendix \ref{app:network_effects} for further background context and related discussion).  For example, for an RDB predictive task dominated by homophily relationships, mean or mode aggregation could potentially be quite valuable. Meanwhile, stdev, entropy, or quantile aggregations could be suitable for capturing a portion of network effects leaning in the heterophily direction.  Concatenation of multiple such aggregation functions has also been advocated in \citet{corso2020principal}, where it is shown that a single aggregator alone cannot differentiate various non-isomorphic graph structures.  Moreover, it is reasonable to invoke LLM agents for proposing candidate sets of aggregation functions and/or associated meta-paths (albeit at a greater risk of leakage because of prior LLM exposure to benchmarking data).  The SQL-based foundation of RDBLearn, discussed further in Section \ref{sec:implementation}, is particularly amenable to this form of augmentation. 

\vspace*{-0.2cm}
\subsection{Encoder Training is Not Necessary}
\vspace*{-0.1cm}

For a generic GNN or Transformer-based encoder with multi-dimensional node features, trainable linear filters are critical for increasing performance.  However, by design our situation only requires 1D node features, which facilitates helpful simplifications per the following:
\vspace*{0.1cm}
\begin{proposition} \label{prop:no_parms}
If $\sigma$ and $\operatorname{agg}$ are positively homogeneous functions, we initialize embeddings using (\ref{eq:1D_subgraph}), and $\{w_0^{(h)}, w^{(h)} \} \in \mathbb{R}_+$ ~$\forall h$, then without loss of generality (\ref{eq:MP-GNN_layer}) can be reparameterized with no internal weights. 
\end{proposition}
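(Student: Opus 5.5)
The plan is to show that every layer of (\ref{eq:MP-GNN_layer}) with positive weights equals a weight-free layer, up to one positive scalar per layer. These scalars can then be pushed through the whole meta-path recursion, so the output differs from a weight-free computation only by a fixed positive rescaling (and a fixed relative weighting between self and neighbor terms). I will use positive homogeneity in its standard form: $\sigma(c a) = c\,\sigma(a)$ and $\operatorname{agg}[\{c a_u\}] = c\,\operatorname{agg}[\{a_u\}]$ for all $c>0$.

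The first step treats a single layer. Since $w^{(h)}>0$, I factor it out of the aggregation:
\[
\operatorname{agg}\bigl[\{w^{(h)}\mu_u^{(h)}\}\bigr] = w^{(h)}\operatorname{agg}\bigl[\{\mu_u^{(h)}\}\bigr].
\]
I then write $w_0^{(h)}\mu_v^{(h)} + w^{(h)}\operatorname{agg}[\cdot] = w^{(h)}\bigl(\alpha^{(h)}\mu_v^{(h)} + \operatorname{agg}[\cdot]\bigr)$ with $\alpha^{(h)} = w_0^{(h)}/w^{(h)} \ge 0$. Applying homogeneity of $\sigma$ gives
\[
\mu_v^{(h+1)} = w^{(h)}\,\sigma\bigl(\alpha^{(h)}\mu_v^{(h)} + \operatorname{agg}[\{\mu_u^{(h)}\}]\bigr).
\]
If $w^{(h)}=0$ is allowed (nonnegative weights), I handle it separately. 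In that case the layer is $w_0^{(h)}\sigma(\mu_v^{(h)})$, again a positive multiple of a weight-free map, or identically zero, which is degenerate.

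The second step is induction over $h$ along the meta-path $\rho_H$. The claim is that $\mu^{(h)} = c^{(h)}\tilde\mu^{(h)}$, where $c^{(h)} = \prod_{h'<h} w^{(h')}$ and $\tilde\mu$ follows the weight-free recursion with the same initialization (\ref{eq:1D_subgraph}). Here the 1D nature of the features is essential: every node carries a scalar and every weight is a scalar, so one scalar $c^{(h)}$ is common to all nodes of a layer, including the zero-initialized nodes from other tables. This uniformity is what lets the constants commute through $\operatorname{agg}$ and $\sigma$ at the next layer. In particular, the self term is $w_0^{(h)} c^{(h)}\tilde\mu_v$ and the neighbor term is $w^{(h)} c^{(h)}\operatorname{agg}[\tilde\mu_u]$, so factoring out $c^{(h)}w^{(h)}$ reproduces the form from the first step.

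The remaining obstacle is the ratio $\alpha^{(h)}$, which is not literally removed by homogeneity. I would handle it by one of two routes. The first is to argue that, since meta-paths traverse each table at most once, the self term at intermediate hops acts on nodes whose features are $0$ by (\ref{eq:1D_subgraph}) or are already determined. The second is simpler: absorb $\alpha^{(h)}$ by noting that the final embedding column is then a known positive rescaling of a fixed parameter-free quantity, with the self-loop either dropped or given unit weight. I expect this to be the delicate point, and I would first try the structural argument that the self term is zero along a meta-path. Finally, a global positive rescaling $c^{(H)}$ of a column of $\bZ$ carries no information relevant to a downstream ICL head $q_\theta$, which should be invariant to per-column positive scaling, for example under standardization. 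So the weights can be set to $1$ without loss of generality.
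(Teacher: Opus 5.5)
Your reduction is correct as far as it goes. Factoring $w^{(h)}>0$ through $\operatorname{agg}$ and $\sigma$, with the common scalar $c^{(h)}=\prod_{h'<h}w^{(h')}$, turns (\ref{eq:MP-GNN_layer}) into the one-parameter recursion $\tilde\mu_v^{(h+1)}=\sigma\bigl(\alpha^{(h)}\tilde\mu_v^{(h)}+\operatorname{agg}[\{\tilde\mu_u^{(h)}\}]\bigr)$. But removing $\alpha^{(h)}$ is the actual content of the proposition, and you leave it as a step you ``would try.'' That is the gap. Your second route cannot close it. If a node's self term and neighbor term were ever both nonzero, different values of $\alpha^{(h)}$ would give genuinely different functions of the inputs. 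Already for linear $\sigma$ these are different linear combinations of $\tilde\mu_v^{(h)}$ and $\operatorname{agg}[\cdot]$, not positive multiples of one fixed quantity. So dropping the self-loop or fixing its weight to one would shrink the function class; it would not be without loss of generality.

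Your first route is the right one, and it is exactly the paper's argument. However, it must be stated as a dichotomy, not as ``the self term is zero along a meta-path.'' That claim is false for nodes of table $k$ and for nodes that have already received messages. The argument runs as follows.
\begin{itemize}
\item Each layer activates a single edge type, and the meta-path visits each table at most once. So a given node has $\mathcal{N}_v^{(h)}\neq\emptyset$ at no more than one step.
\item Before that step its embedding is $0$. Its table is not $k$, so (\ref{eq:1D_subgraph}) initializes it to $0$. The intervening neighbor-free updates $\sigma(w_0^{(h)}\cdot 0)$ keep it at $0$, because positive homogeneity forces $\sigma(0)=0$.
\item Hence whenever neighbors are present, the layer reduces to $w^{(h)}\sigma\bigl(\operatorname{agg}[\{\mu_u^{(h)}\}]\bigr)$. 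Whenever the self term is nonzero, there are no neighbors and the layer is $w_0^{(h)}\sigma(\mu_v^{(h)})$.
\end{itemize}
So every step is a nonnegative multiple of a weight-free map, and $\alpha^{(h)}$ never matters. The multiples depend only on node type, so they are uniform within each aggregated neighbor set. They therefore compose into a single column-wise scale, which $q_\theta$ or standardization absorbs. Under the schedule of one meta-path edge per layer and exactly $H$ layers, every node feeding the target output is used at its receiving step, where its self term vanishes. The surviving factor is then precisely your $c^{(H)}=\prod_h w^{(h)}$.
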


We remark that many of the most commonly used activations (e.g., ReLU, leaky-ReLU, linear) and aggregations (e.g., mean, sum, mode, min, max, stdev) are positively homogeneous, so this requirement is not a significant limitation.  Additionally, while assuming positive internal weights does impose some modest form of constraint, sign information can be re-introduced by absorbing into revised $\sigma$ and $\mbox{agg}$ definitions.  Multiple such variations can be incorporated into JUICE to recover full expressiveness (in Section \ref{sec:new_foundation} we also advocate for inclusion of multiple aggregators).

As such, by virtue of Proposition \ref{prop:no_parms}, we can remove parameters from our 1D GNN implementation of JUICE without any appreciable loss of expressiveness.  Note that even if such a reparameterization produces additional scale factors on the output layer (as opposed to internal weights), these can be absorbed into $q_\theta$.  In fact, normalizing ICL feature dimensions (a common pre-processing tactic \cite{hollmann2025accurate}) removes such scale factors anyway.

\vspace*{-0.2cm}
\subsection{Combining JUICE with Single-Table ICL Model} \label{sec:combine_juice}
\vspace*{-0.1cm}

By preserving column roles and identities (without conversion to traditional dense embeddings that fuse cross-column units and network effects), ICL samples $\{\bz_{i:}, y_i \}_{i=1}^n$ produced by JUICE retain the canonical form for which single-table foundation models were originally designed. And critically, because this is achievable via a parameter-free encoding process, no additional training is required at any stage if we simply pair JUICE  with one of these existing single-table models for instantiating the decoder $q_\theta$, e.g., TabPFN \cite{hollmann2025accurate}.  

It is worth emphasizing that models like TabPFN are highly capable of processing multiple disparate tabular feature columns, pruning away the unnecessary ones, and making predictions \cite{zhang2025tabpfn}; they are not especially sensitive to the specific marginal distributions of each column that may change with vertical aggregations.  As such, by construction JUICE largely preserves current TabPFN advantages and can leverage future enhancements as well.

\vspace*{-0.2cm}
\section{Analytical Support for JUICE} \label{sec:analysis}
\vspace*{-0.1cm}

Prior to seeing ICL samples with labels, it is not generally possible for any dataset-agnostic encoder (meaning one that has not undergone supervised training on a specific dataset) to consistently determine from individual subgraphs alone which constituent features are informative and which are not.  The caveats mentioned in Section \ref{sec:rdb_encoder_intro} notwithstanding, this is because the role and utility of any given feature itself, such as a column in an auxiliary table or a meta-path connecting different entity types, can vary from RDB to RDB, and even from task to task within a single RDB.  In this section we further examine this limitation of fixed encoders in two complementary respects as related to the ICL setting.

\vspace*{-0.1cm}
\subsection{Limitations of Cross-Column Dense Embeddings} \label{sec:compression_limitations}
\vspace*{-0.1cm}

We first consider a simplified data generation scenario introduced to isolate the consequences of using compressed embeddings from an arbitrary dense encoder.  Similar consequences emerge from more complex setups as well at the cost of presentation clarity.

\begin{definition} \label{def:UFM}
    We define $\calD(\pi)$ as a dataset generated using a function $\pi : \mathbb{R}^\kappa \rightarrow \mathbb{R}$ as follows.  First draw samples $\bx_{i:} \in \mathbb{R}^d$ from some $p(\bx_{i:})$,  where $i=1,\ldots, n+1$ and $p(\bx)$ is assumed to be an absolutely continuous distribution satisfying $p_{\min} \leq p(\bx) \leq p_{\max}$ for all $\bx \in \calX^d$.\footnote{Each such generated $\bx_{i:}$ can be viewed as a set of features collected from one or more tables.} We next select a set of $\kappa$ indices from $\{1,\ldots,d\}$ uniformly at random (without replacement) and form reduced features $\widetilde{\bx}_{i:} \in \mathbb{R}^\kappa$.  And finally, we compute labels $y_i = \pi(\widetilde{\bx}_{i:})$ for all $i$.  We then reserve $\{\bx_{i:}, y_i \}_{i=1}^n$ for ICL while treating sample $n+1$ as $\{\bx_{\tiny \mbox{test}}, y_{\tiny \mbox{test}}\}$. 
\end{definition}
\vspace*{-0.1cm}

By design of this generative process, uninformative features correspond with dimensions of $\bx_{i:}$ that have been pruned in forming the reduced set $\widetilde{\bx}_{i:}$, which alone is relevant to predicting $y_i$ within any given dataset.  In the context of \textit{supervised training} involving a single dataset draw, it is natural to learn compressed encoder representations $\bz_{i:} = g_\phi(\bx_{i:}) \in \mathbb{R}^r$ with $\kappa \leq r < d$.  For example, an ideal scenario would have this encoder simply learning to discard all unnecessary dimensions of the original features $\bx_{i:}$, such that making optimal predictions with the resulting compressed representation $\bz_{i:} \approx \widetilde{\bx}_{i:}$ is substantially easier.  As a stark contrast, such simplifying encoder compression is \textit{not} generally possible when we turn to  \textit{ICL settings}:  

\vspace*{0.1cm}
\begin{proposition} \label{prop:compression}
Assume $\pi : \mathbb{R}^\kappa \rightarrow \mathbb{R}$ is affine, with weights in general position.  Then for any $n > \kappa$,  there exists a fixed ICL decoder\footnote{For simplicity in this section, we adopt a deterministic decoder $f_\theta$, as opposed to the probabilistic version $q_\theta$ introduced earlier.} $f_\theta$ such that
\begin{equation} \label{eq:decoder_only_error}
    \big| y_{\tiny \mbox{test}} - f_\theta\big(\{\bx_{i:}, y_i \}_{i=1}^n, \bx_{\tiny \mbox{test}} \big) \big|  = 0
\end{equation}
with probability one over datasets $\calD(\pi)$ generated according to Definition \ref{def:UFM}.  Meanwhile, for any $r < d$, and \underline{any possible} fixed encoder-decoder pair $\{g_\phi, f'_\theta \}$, we have
\begin{equation}
   \big| y_{\tiny \mbox{test}} - f'_\theta\big(\{\bz_{i:}, y_i\}_{i=1}^n, \bz_{\tiny \mbox{test}} \big) \big|  > 0, ~~~ \bz = g_\phi( \bx) \in \mathbb{R}^r
\end{equation}
with probability at least $1- \tbinom{r}{k}/ \tbinom{d}{k}$ over the same data generative distribution.
\end{proposition}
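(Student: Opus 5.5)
The plan has two parts: an explicit construction for the decoder-only bound (\ref{eq:decoder_only_error}), and a counting argument over index subsets for the encoder-decoder bound.

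\textbf{Decoder-only construction.} I would let $f_\theta$ run an exhaustive subset search over the ICL samples. For every $\kappa$-subset $S \subset \{1,\ldots,d\}$, it solves the linear system $y_i = \bw_S^\top \bx_{iS} + b$ for $i=1,\ldots,n$. It selects a subset $\widehat S$ whose system is consistent and outputs $\widehat{\bw}_{\widehat S}^\top \bx_{\mbox{\tiny test},\widehat S} + \widehat b$.

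Zero error then requires two facts with probability one over $\calD(\pi)$.
\begin{itemize}
\item The true subset $S^\ast$ is consistent, and its solution is unique. Uniqueness holds because the $\bx_{i:}$ come from an absolutely continuous $p(\bx)$, so the relevant design matrix has full column rank almost surely.
\item Every $S \neq S^\ast$ is inconsistent. For such $S$, general position of the weights means $y$ depends nontrivially on some coordinate outside $S$. Exact consistency then forces the sample $(\bx_{1:},\ldots,\bx_{n:})$ onto a proper algebraic subvariety, which has Lebesgue measure zero. A finite union over the $\tbinom{d}{\kappa}$ subsets is still null.
\end{itemize}
Counting parameters here is the first delicate point. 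With an intercept there are $\kappa+1$ unknowns, so $n > \kappa$ gives only $n \geq \kappa+1$ equations. In that case a wrong subset may still interpolate, and discrimination needs one more sample. I would handle this in one of two ways: read ``affine with weights in general position'' so that the intercept is fixed (or absorbed), or note that the argument goes through verbatim once the count of equations exceeds the count of unknowns.

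\textbf{Lower bound for any fixed encoder.} I would show that zero error for a given realized subset $S$ forces $\bx_S$ to be recoverable from $\bz = g_\phi(\bx)$.
\begin{itemize}
\item On a positive-measure set of test points, $f'_\theta$ sees only $\bz_{\mbox{\tiny test}}$ and a context. By the previous paragraph, the context generically determines $(S,\bw_S,b)$.
\item So zero error needs $\bw_S^\top \bx_S$ to be a function of $g_\phi(\bx)$ for almost all $\bx$.
\item This must hold for a generic (open) set of weights, since the decoder is fixed before $\bw$ is drawn. Hence it holds for $\kappa$ linearly independent $\bw$, so $\bx_S$ itself is a function of $g_\phi(\bx)$.
\end{itemize}

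Now let $\mathcal{F}$ be the family of ``good'' $\kappa$-subsets, those on which the error can vanish, and let $U = \bigcup_{S \in \mathcal{F}} S$. Then $\bx_U$ is recoverable from $g_\phi(\bx) \in \mathbb{R}^r$, so $\bx_U \mapsto g_\phi$ is injective on an open set of $\mathbb{R}^{|U|}$ (fixing the remaining coordinates). Invariance of domain then gives $|U| \leq r$, provided $g_\phi$ is continuous. Hence $|\mathcal{F}| \leq \tbinom{r}{\kappa}$. Since $S$ is drawn uniformly, the probability of a good subset is at most $\tbinom{r}{\kappa}/\tbinom{d}{\kappa}$, which gives the stated bound (with $k \equiv \kappa$).

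\textbf{Main obstacle.} The hardest step is this last dimension-counting argument. For arbitrary measurable $g_\phi$, Borel isomorphisms $\mathbb{R}^d \to \mathbb{R}$ exist, and then no such bound can hold. I would therefore make explicit the regularity that ``encoder'' implicitly carries: continuity, or local Lipschitzness, as holds for any GNN or Transformer with standard activations. I would also handle the ``almost-everywhere'' version of injectivity carefully, by restricting to a positive-measure open set on which the recovery map is continuous. This step will need the most care to make fully rigorous.
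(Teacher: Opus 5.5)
Your decoder-only half is essentially the paper's argument: an exhaustive search over supports, which works because the true support is almost surely the only consistent one. The paper gets uniqueness from a spark/generic-weights lemma; you get it from genericity of $\mathbf{X}$; either works. Your intercept caveat is correct, and it is exactly what the paper silently assumes away: it writes $\mathbf{y}-b=\mathbf{X}\mathbf{w}$ and $\mathbf{w}=\mathbf{X}^{\dagger}\mathbf{y}$ as if $b$ were known.

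The encoder half has a genuine gap at the step you flagged, and continuity or local Lipschitzness does not close it. Two things go wrong.

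\emph{The good/bad dichotomy is not all-or-nothing.} If ``good'' means $P(\text{success}\mid S)=1$, then $P(\text{success})\le|\mathcal{F}|/\binom{d}{\kappa}$ does not follow. If it means $P(\text{success}\mid S)>0$, then $\mathbf{x}_S$ is recoverable only on an $S$-dependent positive-measure region, so $\mathbf{x}_U$ need not be recoverable anywhere. For instance, take $\kappa=r=1$ and $g_\phi(\mathbf{x})=\max_j x_j$, with a decoder that fits the line through the collinear points of $\{(z_i,y_i)\}$. This decoder succeeds whenever the active coordinate of $\mathbf{x}_{\mathrm{test}}$ is the largest one. So every singleton is good and $|U|=d>r$. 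The bound $1/d$ still holds here, but only by averaging, not by counting.

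\emph{Positive-measure sets need not contain open sets,} so invariance of domain is unavailable. In fact the statement is false for Lipschitz encoders. Let $C\subset[0,1]$ be a fat Cantor set of measure $1-\epsilon$ with stage-$k$ gaps of length $\epsilon 3^{-k}$. Encode $\mathbf{x}\in C^d$ by its stage-$k$ address digits $a_k\in\{0,\dots,2^d-1\}$, and set $\phi(\mathbf{x})=\sum_k a_k(2^d+1)^{-k}$. Points first separated at stage $k$ are at least $\epsilon 3^{-k}$ apart, while their images are distinct and $O((2^d+1)^{-k})$ apart. Hence $\phi$ is injective and Lipschitz on $C^d$, and by McShane it extends to a Lipschitz $g_\phi:[0,1]^d\to\mathbb{R}$. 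Now take the decoder that inverts $\phi$ on $\phi(C^d)$ and then runs your part-1 search. It has zero error whenever all $n+1$ points lie in $C^d$. That event has probability at least $1-(n+1)d\,\epsilon\,p_{\max}$, which exceeds $\binom{r}{\kappa}/\binom{d}{\kappa}$ for small $\epsilon$.

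What you need is a rigidity assumption plus a local count. Suppose $g_\phi$ is piecewise real-analytic, which covers ReLU and analytic-activation networks. Then $\operatorname{rank}Dg_\phi$ is locally constant off a null set. The constant-rank theorem plus Fubini along level sets gives the following. For a.e.\ test point $\mathbf{x}$, if success occurs at $\mathbf{x}$ for a positive-measure set of weight/context pairs, then $\mathbf{e}_j\in\operatorname{row}Dg_\phi(\mathbf{x})$ for all $j\in S$. Hence at most $\binom{r}{\kappa}$ subsets can succeed at $\mathbf{x}$. Averaging over $\mathbf{x}_{\mathrm{test}}$, which is independent of $S$, then gives the bound.

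This argument also uses your reading that the weights are random with a density, and you should state that as an assumption. With fixed weights, $\kappa=2$ and $r=1$, the encoder $\tilde{\mathbf{w}}^\top\mathbf{x}_{\{1,2\}}$ already succeeds with positive probability, whereas the claimed bound demands success probability $0$.

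The paper's own proof does not supply the missing step either. It simply asserts that an $r$-dimensional code can reconstruct at most $r$ coordinates, which is exactly what fails above.
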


This result can be generalized to nonlinear data generation schemes and further elucidated with more precise error bounds.  However, the core message of Proposition \ref{prop:compression} is straightforward and can be conveyed without these extensions: \textit{Ideal per-dataset compression of informative and uninformative feature columns is not possible with an encoder that must remain fixed across a distribution of input datasets, where column roles are subject to change}.

\vspace*{-0.1cm}
\subsection{Sample Complexity Considerations}
\vspace*{-0.1cm}

In Section \ref{sec:compression_limitations} we assumed an encoder stage that formed dense, compressed representations with dimensionality $r < d$.  We now turn to the case where $r = d$, i.e., no enforced compression across feature columns.  In this regime we examine the degree to which a dense encoder $g_\theta$ may still negatively influence the sample complexity required to achieve a given estimation error. 

At first glance this possibility may seem counter-intuitive.  After all, during the training phase encoder-decoder pairs $\{g_\phi, f_\theta \}$ can (in principle) learn to coordinate in such a way that a parameterized $g_\phi$ producing dense cross-column representations is helpful.  This is certainly true for new inference tasks that lie well \textit{within} the distribution of the pre-training data.  However, most real-world applications of ICL will inevitably deviate from this (likely synthetic) distribution, and it is here that a complex, parameterized encoder (as distinct from the decoder) can introduce problems.

To better understand this phenomena, we note that single-table ICL via a \textit{ decoder alone} can be relatively stable to distribution shifts \cite{zhang2025tabpfn}, and asymptotically consistent under the right conditions \cite{nagler2023statistical}.  In principle then, we can still obtain reasonable results even for new datasets that may substantially differ in distribution from a synthetic training set.  Now consider the addition of an encoder, which sees \textit{only a single input feature vector at a time}, not the full set of ICL samples; an OOD input here can induce a substantially different encoder representation.  And yet the encoder's behavior is only ``known'' to the decoder indirectly over the support of the pre-training data.  And so from the decoder's standpoint, such a representation can lose any coordinated characteristics that might otherwise reduce estimation difficulty or sample complexity.  In other words, in OOD regimes the encoder can essentially behave like an unknown transform, and if uninformative features exist within the original column-wise frame of reference, they will now be mixed by a process unknown to the decoder.

This difference can be dramatic.  Working in the original feature frame leads to a sample complexity scaling with an exponential dependency on $\kappa$ (the number of informative features), while the OOD encoder can push this rate to be much worse, scaling exponentially with $d$ (the ambient dimension), even on the simplest of estimation problems.  We quantify this phenomena as follows; see also Appendix \ref{app:encoder_sim_experiment} for empirical corroboration.

\vspace*{-0.0cm}
\begin{proposition}
Let $\calF$ denote the set of Lipschitz continuous functions on $[0,1]^\kappa$.  Then there exists an ICL decoder $f_\theta$ such that (excluding smaller-order terms) we have
\vspace*{-0.15cm}
\begin{equation} \nonumber 
\sup_{\pi \in \calF} \mathbb{E}_{\calD(\pi) \sim p} \Big[ y_{\tiny \mbox{test}} - f_\theta\big(\{\bx_{i:}, y_i \}_{i=1}^n, \bx_{\tiny \mbox{test}} \big) \Big]^2  = O\left(n^{-2/\kappa} \right)
\vspace*{-0.1cm}
    \end{equation}
where the data distribution $p$ is given by Definition \ref{def:UFM} with $\calX = [0,1]$.  In contrast, with $\kappa = 1$ and $y = \widetilde{x}$, we have
\begin{equation} \nonumber
\inf_{f_\theta} \sup_{g_\phi \in \calB} \mathbb{E}_{\calD(\pi) \sim p} \Big[ y_{\tiny \mbox{test}} - f_\theta\big(\{\bz_{i:}, y_i \}_{i=1}^n, \bz_{\tiny \mbox{test}} \big) \Big]^2 \hspace*{-0.2cm}  = \Theta\left(n^{-2/d} \right)
    \end{equation}
where $\bz = g_\phi(\bx)$ represents an encoder selected from the set of bi-Lipschitz bijections $\calB$ mapping $[0,1]^d \rightarrow [0,1]^d$.
\end{proposition}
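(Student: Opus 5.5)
For the first claim I will build an explicit two-stage ICL decoder $f_\theta$ that works directly in the original column frame: variable selection followed by local averaging. Since $\kappa$ is known to the decoder and the informative index set $S$ of size $\kappa$ is drawn uniformly, a fixed decoder can enumerate all $\binom{d}{\kappa}$ candidate subsets. For each candidate it computes a held-out (leave-one-out or sample-split) nearest-neighbour or partitioning estimate, keeps the subset with the smallest validation error, and outputs a $\kappa$-dimensional local-average prediction at $\widetilde{\bx}_{\tiny \mbox{test}}$. The rate then comes from two pieces.

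First, conditioned on the correct $S$, standard Lipschitz nonparametric bounds apply because $p_{\min}\le p\le p_{\max}$ controls cell masses. With a cubic partition of side $h$ in $[0,1]^\kappa$, the bias is $O(h^2)$ and the variance is $O(1/(nh^\kappa))$ (zero noise, so even smaller). Optimizing $h\sim n^{-1/(\kappa+2)}$ gives $O(n^{-2/(\kappa+2)})$. To reach the stated $O(n^{-2/\kappa})$ I will use the noiseless structure: 1-NN in the $\kappa$ relevant coordinates has squared error at most $L^2\,\mathbb{E}\|\widetilde{\bx}_{\tiny \mbox{test}}-\widetilde{\bx}_{(1)}\|^2=O(n^{-2/\kappa})$ by the usual nearest-neighbour distance bound for densities bounded below.

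Second, the selection error contributes only a smaller-order term. A subset $S'$ that excludes some relevant coordinate on which $\pi$ genuinely varies has validation error bounded away from the oracle's. Subsets that are supersets, or on which $\pi$ happens not to depend, cannot hurt beyond a constant factor, with a union bound over the finitely many subsets. I expect this step to be the main technical nuisance, since a sup over Lipschitz $\pi$ allows near-degenerate functions. To handle it I will simply bound the error of whatever subset is selected by its validation error plus a concentration deviation of order $\sqrt{\log\binom{d}{\kappa}/n}$ times a bounded loss. I will then argue that this is lower order in the regime $\kappa\ge 4$, and for small $\kappa$ replace the argument with a median-of-estimates aggregation, absorbing everything into the "excluding smaller-order terms" caveat.

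For the second claim, with $y=x_j$ for a random $j$, the upper bound $O(n^{-2/d})$ follows because $y=\widetilde{x}=e_j^\top g_\phi^{-1}(\bz)$ is Lipschitz in $\bz\in[0,1]^d$, as $g_\phi$ is bi-Lipschitz. The pushforward density of $\bz$ stays within bounded constants, since the Jacobian is bounded for a bi-Lipschitz map (Rademacher). So the same 1-NN decoder in $d$ dimensions gives $O(n^{-2/d})$.

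For the lower bound I will use a minimax/Assouad-type construction. The encoder class $\calB$ is rich enough that, for any fixed decoder, the adversary picks $g_\phi$ such that the composite regression function $\bz\mapsto e_j^\top g_\phi^{-1}(\bz)$ ranges over a packing of $M\sim h^{-d}$ bump perturbations of height $\sim h$ on a grid of side $h$. Small-amplitude bumps composed with the identity remain bi-Lipschitz with uniformly bounded constants. Because the decoder sees only $n$ samples, with $nh^d\ll 1$ most cells contain no sample and the test point lands in an empty cell with constant probability. There the labels carry no information about the bump sign, forcing squared error $\gtrsim h^2$. Choosing $h\sim n^{-1/d}$ yields $\Omega(n^{-2/d})$. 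The sup over $g_\phi$ dominates the average over a random sign pattern, which by Fano/Assouad lower-bounds $\inf_{f_\theta}$. Verifying that the perturbed maps stay bijective with controlled Lipschitz constants is the delicate step; I would take $g_\phi^{-1}(\bz)=\bz+\sum_c \epsilon_c h\,\psi((\bz-c)/h)e_j$ with $\|\nabla\psi\|$ small, which is a monotone shear in coordinate $j$ and hence invertible.
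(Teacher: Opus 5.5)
\paragraph{Comparison with the paper's proof.} Your first part is essentially the paper's proof. The paper splits the sample, fits a 1-NN predictor on each of the $\binom{d}{\kappa}$ coordinate subsets, and selects by hold-out error at a cost of $O(\sqrt{\log\binom{d}{\kappa}/n})$. It then bounds the true subset's risk by $L^2\,\mathbb{E}\|\cdot\|^2=O(n^{-2/\kappa})$. Like you, the paper only claims the selection term is negligible for large $\kappa$ (it says $\kappa>4$). The upper bound in the second part also matches: $[g_\phi^{-1}]_j$ is Lipschitz, the pushforward density is bounded below, and one runs 1-NN in $\bz$. You are right that the decoder never needs $j$.

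The genuine difference is the lower bound. The paper simply invokes the classical $\Theta(n^{-2/d})$ minimax rate for noiseless Lipschitz regression with design density bounded below. That is short, but it leaves two things unchecked. First, that the hard instances are realizable as $[g_\phi^{-1}]_j$ with $g_\phi\in\calB$. Second, that here the design law $(g_\phi)_{\#}p$ changes together with the regression function. Your cellwise monotone shears plus the Assouad/empty-cell argument establish realizability directly. They also make explicit that $\calB$ must have uniformly bounded bi-Lipschitz constants; otherwise the supremum does not decay at all.

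\paragraph{Details to add.} Your lower bound needs two additions. First, the labels are not the only source of information. On cell $c$, $\bz$ has density $p(g_\phi^{-1}(\bz))\bigl(1+\epsilon_c\,\partial_j\psi((\bz-c)/h)\bigr)$, so $\bz_{\mathrm{test}}$ itself leaks information about $\epsilon_c$. This is harmless, and you should say why:
\begin{itemize}
\item Each shear maps its cell onto itself, so cell occupancy is independent of the signs.
\item With $\delta=\|\partial_j\psi\|_\infty<1$, the likelihood ratio is at most $\Lambda=\frac{p_{\max}(1+\delta)}{p_{\min}(1-\delta)}$.
\item Hence, in an empty cell, the posterior on $\epsilon_c$ stays in $[1/(1+\Lambda),\Lambda/(1+\Lambda)]$, and the Bayes risk is still $\gtrsim h^2\psi(u)^2$ with $u=(\bz_{\mathrm{test}}-c)/h$.
\end{itemize}
Second, $j$ is drawn at random after $g_\phi$ is fixed. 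So either shear one fixed coordinate and lose a factor $1/d$, or shear all coordinates at once. The latter is the identity plus a perturbation with Lipschitz constant below one, which is still a bijection of each cell.

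Finally, your ``median-of-estimates'' fix for small $\kappa$ would not work. The $\sqrt{\log M/n}$ term comes from ignoring the variance of the loss, not from heavy tails. The right tool is a Bernstein (relative-deviation) bound: the nonnegative bounded loss has variance at most a constant times its mean. This yields $3\min_{S'}R(f_{S'})+O(\log\binom{d}{\kappa}/n)$, which extends the first claim to $\kappa\ge 2$.
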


\vspace*{-0.1cm}
\section{RDBLearn: A Lightweight RDB Toolkit} \label{sec:implementation}
\vspace*{-0.1cm}

We develop the open-source package \textit{RDBLearn} to operationalize the merger of JUICE with single-table foundation models as advocated in Section \ref{sec:combine_juice}. The result is an easy-to-use RDB toolbox with no training required.  See Appendix \ref{app:toolbox} and  companion documentation \cite{RDBLearn2026} for RDBLearn usage and design specifics, including its optional agent-specific interface.  We highlight several attributes here:
\vspace*{-0.5cm}

\begin{itemize}[leftmargin=1em]
  
  \item \textit{Intuitive interface}: The programming interface is designed to mirror the underlying formulation: core objects in the API correspond with concepts such as the relational database context $\{\calT,\calR\}$, ICL samples $\calD$, and per-instance relational neighborhoods $\calG_H(\bx)$. This makes it straightforward to relate experimental configurations and results back to modeling assumptions.

\vspace*{-0.1cm}
\item \textit{SQL-backed JUICE optimizations}:  We implement JUICE principles using DFS primitives \cite{kanter2015deep}  through SQL execution over relational tables, leveraging the database engine for joins and aggregations. System optimizations include: (i) Translating feature synthesis primitives into SQL queries with aggregation pushdown; (ii) Reusing intermediate results through caching or incremental materialization; and (iii) Compiling cutoff-time constraints into the SQL execution plan to avoid temporal leakage when predicting future targets. 

  \item \textit{ICL-model agnostic design}: Any single-table ICL predictor that follows the scikit-learn estimator interface can be directly incorporated. This supports controlled comparisons and upgrades under the same relational pipeline.

\item \textit{Temporal OOD reduction}: While base ICL predictors like TabPFN have been previously applied to forecasting tasks \cite{hoo2024tabular}, this requires introducing an array of cyclic temporal features to avoid OOD effects from unseen time stamps in the inference window.  We sidestep this issue by encoding relative temporal differences rather than absolute times when forming ICL samples.

\end{itemize}

\vspace*{-0.3cm}
\section{Testing with Real-World RDBs} \label{sec:experiments}
\vspace*{-0.1cm}

For all experiments, RDBLearn extracts JUICE embeddings and converts to ICL samples  using official benchmark training splits.  We then choose $H \in \{2,3\}$ and the RDBLearn base model from TabPFN-v2 \cite{hollmann2025accurate}, TabPFN-v2.5 \cite{grinsztajn2025tabpfn}, and LimiX \cite{zhang2025limix} using dev sets, although RDBLearn is stable across these choices as shown below.  \textit{No other hyperparameters or tuning is involved whatsoever across all benchmarks}.  Appendix \ref{app:experiment_details} contains additional experiment details.  Note that for simplicity, RDBLearn is currently implemented to completely ignore all text-based features; this is \textit{not} the case for other baselines.

\vspace*{-0.25cm}
\paragraph{Baselines.}  We compare against the schema-agnostic \textbf{RT} (relational transformer) \cite{ranjan2025relational}, \textbf{Griffin} \cite{wanggriffin}, and \textbf{RelLLM} \cite{wu-etal-2025-large} models, all of which were pre-trained using the RelBench-v1 datasets \cite{robinson2024relbench}.  As for pure language model baselines, we report results from \textbf{LLM-A} \cite{team2025gemma} (as tested in \citet{ranjan2025relational}) and \textbf{LLM-B} \cite{wydmuch2024tackling}; both of these rely on serialized RDB neighborhood representations and/or ICL samples, and both can operate without any RDB pre-training or per-dataset fine-tuning on classification tasks.  Although not a verifiable baseline, for reference we also include the closed-source \textbf{KumoRFM} industry model \cite{fey2025kumorfm}.  And as a representative contrast outside of the foundation model scope of the others, we consider \textbf{RelGT} \cite{dwivedi2025relational}, a SOTA \textit{fully-supervised} approach that shares the same dense encoder as KumoRFM.  In this way, at a conceptual level the key distinction between KumoRFM and RDBLearn lies in the choice of encoder, with the latter alone based on  JUICE foundations from Sections \ref{sec:new_foundation} and \ref{sec:analysis}.  All model hyperparameters were optimized per the specifications in prior work.  

\begin{figure*}[ht]
\centering
\includegraphics[width=0.83\textwidth]{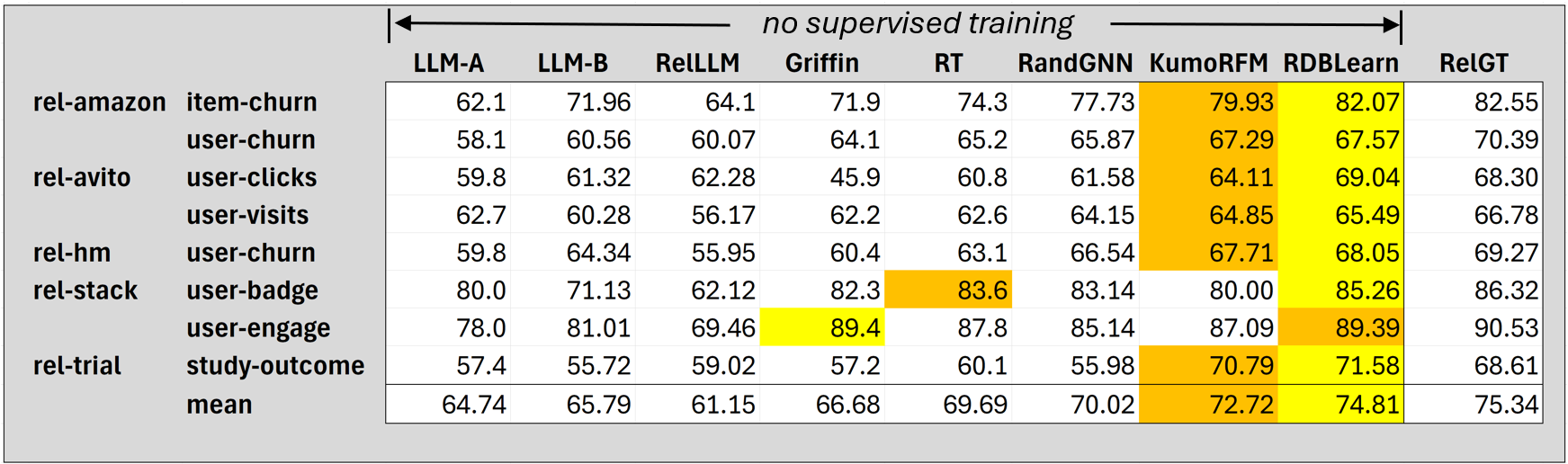}
\vspace*{-0.1cm}
\caption{Entity classification results (AUC) on RelBench-v1; yellow is best, orange is second best among untrained models.}
\label{fig:relbench_class_res}
\vspace*{-0.1cm}
\end{figure*}

Lastly, we implement \textbf{RandGNN}, which amounts to a randomly-initialized GNN architecture for RDB feature encoding coupled with a single-table foundation model as used by RDBLearn; the specific GNN follows the architecture included with the RelBench-v2 release \cite{gu2026relbench}.  Inclusion of this baseline allows us to disambiguate to what extent RDBLearn is merely capitalizing on strong single-table models versus the featurization principles behind JUICE. A substantial literature advocating for the efficacy of randomized-GNN layers specifically as viable feature encoders for relational prediction tasks provides further motivation \cite{bui2025effectiveness,degraeve2022r,xu2023not}. In the present context, such randomized GNN filters explicitly mix cross-column tabular information, violating the vertical-only design of JUICE.

\vspace*{-0.25cm}
\paragraph{RelBench-v1 classification results.}  Figure \ref{fig:relbench_class_res} presents results on the RelBench-v1 datasets, excluding rel-event and rel-f1, both of which have label leakage concerns noticed by ourselves and others (see Appendix \ref{app:experiment_details} for details).  Overall, RDBLearn outperforms the others, and is the only RDB foundation model that is competitive with the fully supervised RelGT approach.  This is notable given that RDBLearn is only exposed to synthetic single-table data (during pre-training of the tabular base model); in contrast RelLLM, RT, and Griffin (and possibly others) have been exposed to each of the actual benchmarks as part of the pre-training adopted to produce the results in Figure \ref{fig:relbench_class_res}.  See Appendix \ref{app:related_work} for further discussion of these models w.r.t.~zero-shot learning.  Lastly, we observe that RandGNN performs quite well (e.g., relative to RDB FMs from prior work) consistent with our analysis-based expectation that trained encoders may be less important in ICL settings (as opposed to supervised settings). And yet cross-column mixing is unhelpful, hence RDBLearn is still uniformly better than RandGNN.

\vspace*{-0.3cm}
\paragraph{RelBench-v1 regression results.}  Regression poses a unique set of challenges to RDB foundation models, and prior work often concedes that regression results are unsatisfactory without per-dataset fine-tuning, particularly for LLM-based approaches \cite{wu-etal-2025-large,wydmuch2024tackling}.  For this reason we have fewer baselines to compare against, which are further compromised by inconsistent metrics and reproducibility issues. Hence Figure \ref{fig:relbench_reg_res} compares only against RandGNN (our implementation), KumoRFM, and RelGT for reference, as no other foundation models are directly comparable (see Appendix \ref{app:experiment_details} for discussion of disanalogous Griffin and RT results).  We remark that RDBLearn matches KumoRFM performance, and is even competitive with supervised RelGT, despite no regression-specific modifications or adjustments.  It is unknown what regression allowances and/or regression-specific real-world pretraining data have been incorporated into KumoRFM.

\begin{figure}[h]
\centering
\includegraphics[width=0.98\columnwidth]{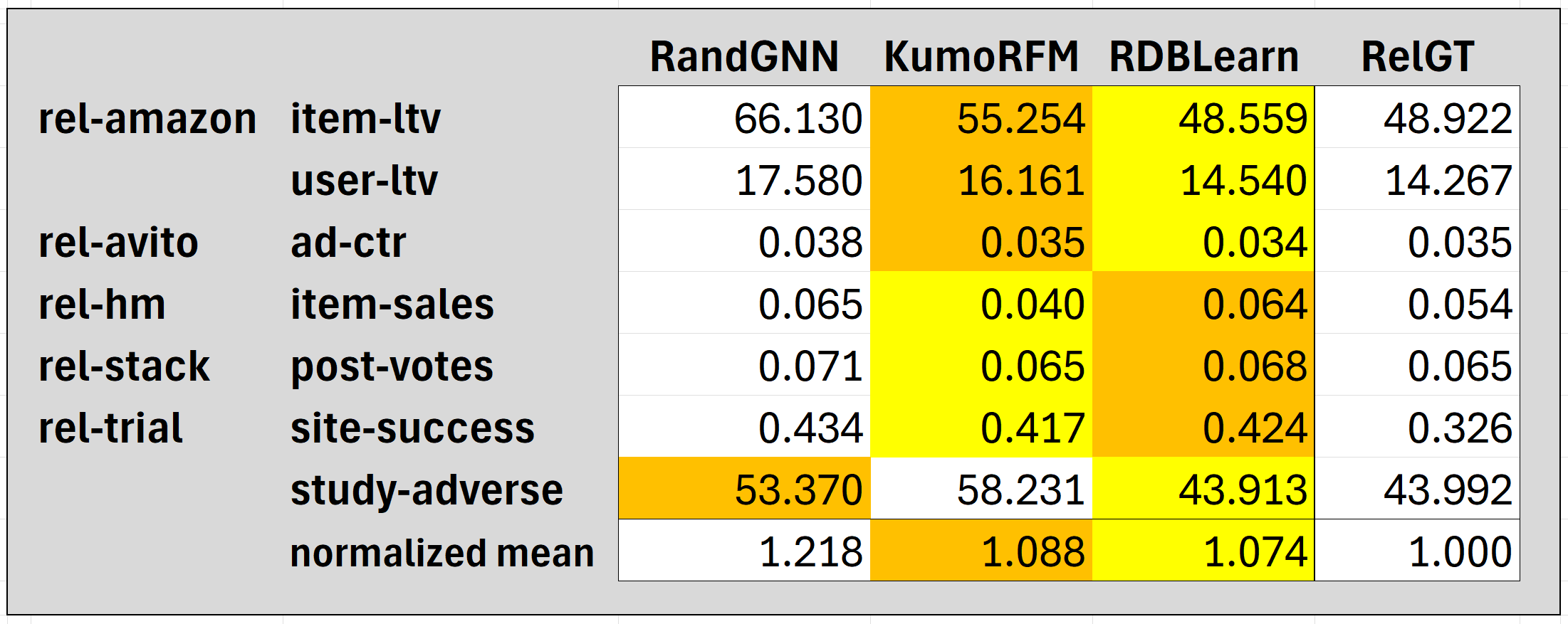}
\caption{Entity regression results (MAE) on RelBench-v1.}
\label{fig:relbench_reg_res}
\end{figure}

\vspace*{-0.2cm}
\paragraph{RelBench-v2 results.}
As requested by ICML reviewers we include preliminary results using the recently-released RelBench-v2 datasets \cite{gu2026relbench}.  We address all classification (binary) and regression tasks with the following exceptions:  we exclude rel-arxiv because it is primarily text-based and outside of our scope, and rel-mimic because it is not publicly-available.  Results are presented in Figure \ref{fig:relbench_v2_res}, where RDBLearn is compared against Griffin and RandGNN, which we executed ourselves (with no additional training) via open-source code, and a fully-supervised GNN, the best performing model reported by \citet{gu2026relbench}.  Impressively, even with zero changes or tuning to accommodate this new testing domain, RDBLearn still achieves the best overall performance (even exceeding the supervised GNN on all but one task).  We also emphasize that these RelBench-v2 tasks consist of hundreds of combined data columns (more than other benchmarks), many of which are likely uninformative.

\begin{figure}[h]
\centering
\includegraphics[width=0.98\columnwidth]{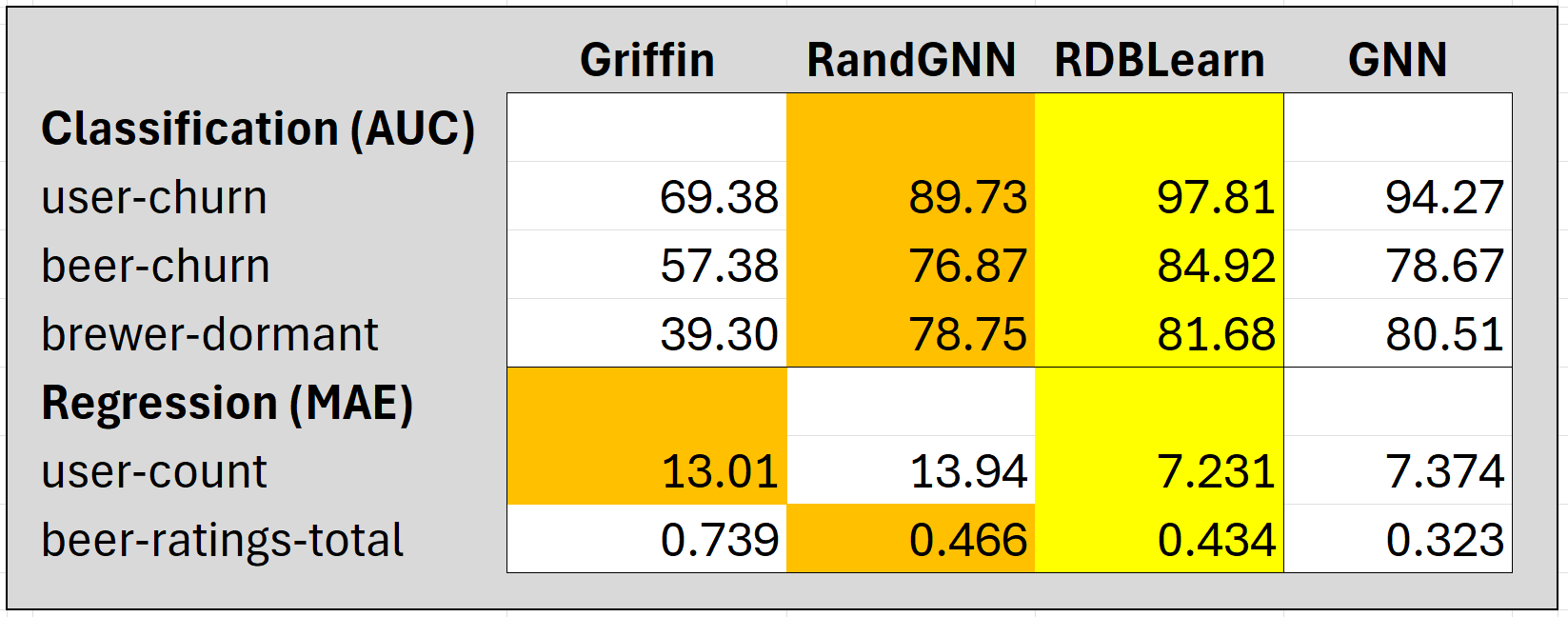}
\caption{Results on RelBench-v2.}
\label{fig:relbench_v2_res}
\vspace{-0.2cm}
\end{figure}

\vspace*{-0.1cm}
\paragraph{Additional 4DBInfer comparisons.}  Prior work on RDB foundation models has focused on RelBench-v1 tasks.  However, to further establish the native versatility of RDBLearn, we apply our identical pipeline (again with zero modification whatsoever besides coupling with a suitable data loader) to the classification tasks drawn from the  4DBInfer benchmark \cite{wang20244dbinfer}.  We compare against a suite of heterogeneous GNN and graph Transformer models specifically adapted for this benchmark with multiple graph extraction techniques; each such approach also benefits from per-dataset supervised learning and extensive hyperparameter optimization (see Appendix \ref{app:experiment_details}).  We also include RandGNN and Griffin as we did with RelBench-v2. Results are shown in Figure \ref{fig:4dbinfer_class_res}, where RDBLearn exhibits strong performance without training, leading to orders of magnitude greater efficiency (see Appendix \ref{app:experiment_details}).

\begin{figure}[h]
\centering
\includegraphics[width=0.98\columnwidth]{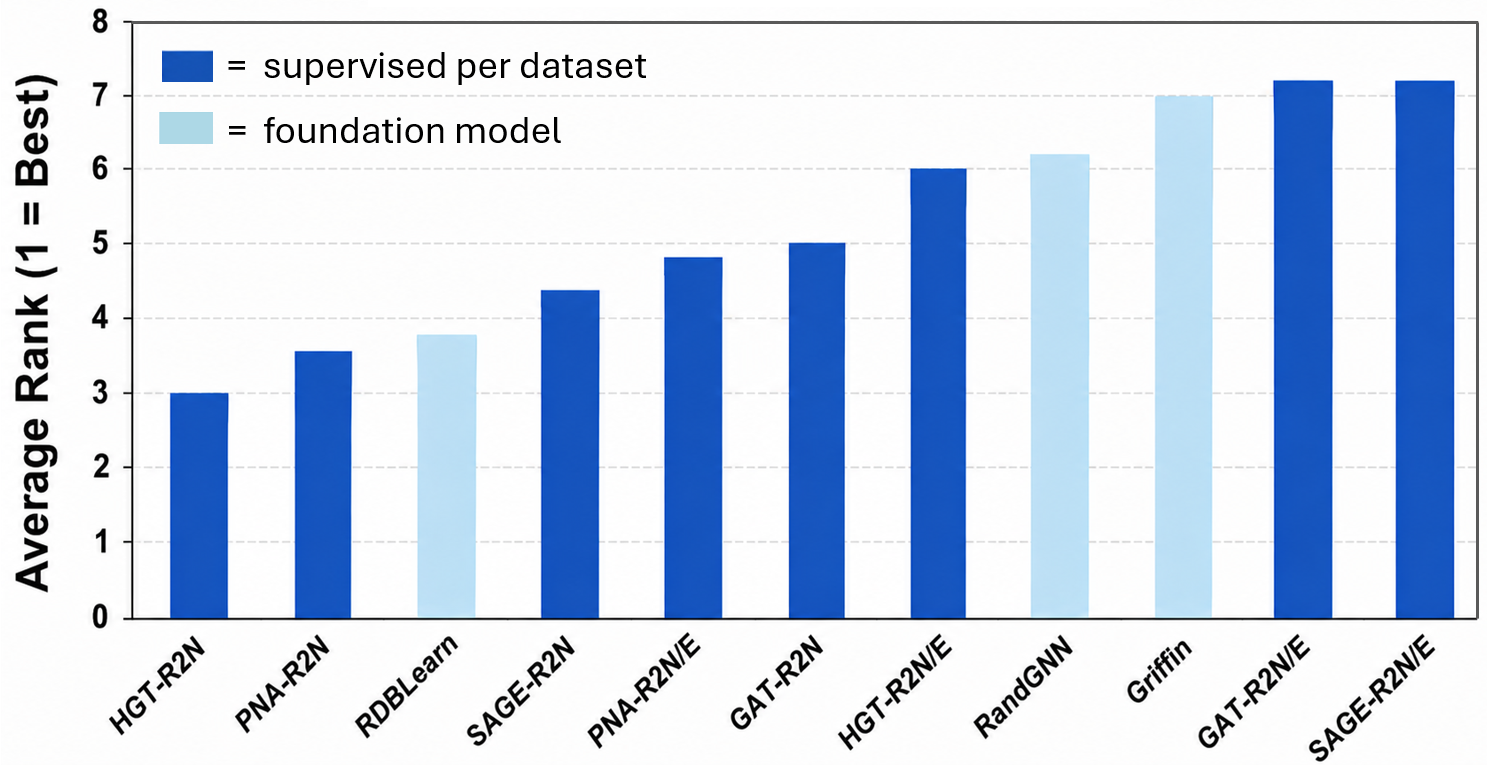}
\vspace*{-0.1cm}
\caption{Entity classification ranking on 4DBInfer.}
\label{fig:4dbinfer_class_res}
\vspace*{-0.2cm}
\end{figure}

\paragraph{JUICE is worth the squeeze.}  If the principles underpinning JUICE (as used by RDBLearn) are sound, then we should expect a notable gap in performance when we contrast supervised learning versus ICL usage.  Specifically, in the SL regime we should expect that an expressive parameterized $g_\phi$ may outperform $g_{\tiny \mbox{juice}}$, while the situation should largely flip when we turn to ICL cases.  To empirically explore this phenomena, which has \textit{not} been previously recognized, we consider the KumoRFM ICL model and the supervised RelGT model, both of which share the same $g_\phi$.  Meanwhile, for $g_{\tiny \mbox{juice}}$ we can pair with both an ICL decoder (as within RDBLearn) and a strong supervised prediction head such as AutoGluon \cite{erickson2020autogluon}. Given these four model instantiations, we plot the corresponding $g_{\tiny \mbox{juice}} - g_\phi$ performance differences split across SL and ICL cases in Figure \ref{fig:ICL_contrast}.  The outcome closely conforms with the expected performance inversion.  In Appendix \ref{app:graph_results} we repeat an analogous experiment drawing on data from prior ICL-based graph foundation model studies; the outcome is similar. These results demonstrate the wider relevance of treating JUICE as an end goal, and not merely a simplifying approximation.

\begin{figure}[h]
\centering
\includegraphics[width=1.0\columnwidth]{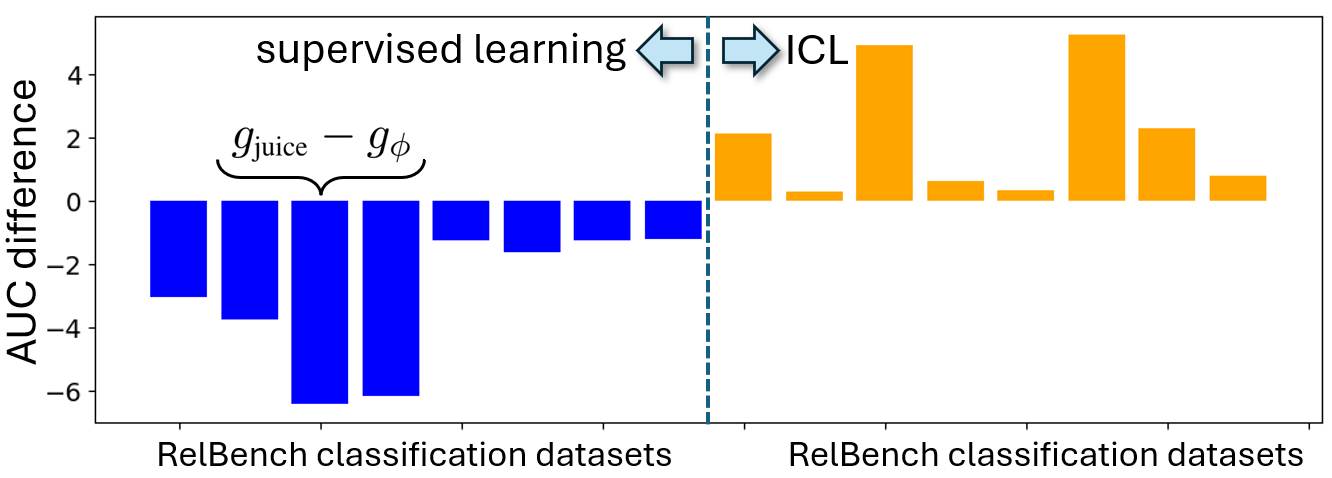}
\caption{Performance inversion of JUICE embeddings.}
\label{fig:ICL_contrast}
\vspace*{-0.1cm}
\end{figure}

\paragraph{Ablation.} Figure \ref{fig:ablation} demonstrates the stability of RDBLearn as the base tabular prediction model and encoder hops $H$ are varied on the RelBench-v1 classification tasks.  Notably, for every combination except one the performance exceeds all other RDB foundation models from Figure \ref{fig:relbench_class_res}.  These observations imply that, even without ever touching benchmark validation sets, nor adjusting any hyperparameters across different tasks, RDBLearn is capable of  competitive test-set performance.  And results will inevitably improve further still as new, more advanced single-table foundation models are released.  
\begin{figure}[h]
\centering
\includegraphics[width=0.75\columnwidth]{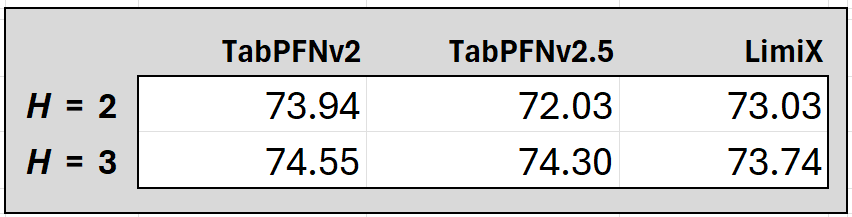}
\caption{RDBLearn ablation (mean AUC on RelBench-v1).}
\label{fig:ablation}
\end{figure}

\vspace*{-0.5cm}
\section{Discussion} \label{sec:discussion}
\vspace*{-0.0cm}

For RDB supervised learning, conventional wisdom points towards expressive parameterized encoders for converting variably-sized subgraph information into dense discriminative representations for end-to-end training.  Meanwhile more traditional parameter-free RDB featurization steps are often viewed as  outdated heuristics.  However, we have argued that these designations need not generally hold, particularly when we move to ICL-based RDB foundation models.  Our RDBLearn toolbox directly exploits these findings, relying only on scalable SQL-based featurization combined with frontier single-table architectures.  Moving forward, both of these components are conducive to integration within agentic systems for either improving performance or addressing broader reasoning tasks involving RDBs \cite{li2025agent}.  In contrast, we contend that heavily-parameterized deep encoder representations (where feature roles become entangled and transparency is compromised) may pose greater challenges when agentic reasoning steps are involved.

\vspace*{-0.1cm}
\section*{Impact Statement}
This paper presents work whose goal is to advance the field of Machine Learning. There are many potential societal consequences of our work, none which we feel must be specifically highlighted here with one exception.  A substantial portion of our contribution relates to increasing our understanding of RDB and tabular modeling components already being used in one way or another.  While such understanding can in principle be exploited for nefarious purposes, overall we believe it to be a net positive.

\bibliography{reference_from_muse,refs,wipf_refs,reference}

@String{Computing = "Computing" }

@String{Computer = "{IEEE} Computer" }

@String{Springer = "Springer-Verlag" }

@BOOK{test,
   author = "Donald E. Knuth",
   title = "Seminumerical Algorithms",
   volume = 2,
   series = "The Art of Computer Programming",
   publisher = "Addison-Wesley",
   address = "Reading, MA",
   edition = "2nd",
   month = "10~" # jan,
   year = "1981",
}

@ArtifactSoftware{R,
    title = {R: A Language and Environment for Statistical Computing},
    author = {{R Core Team}},
    organization = {R Foundation for Statistical Computing},
    address = {Vienna, Austria},
    year = {2019},
    url = {https://www.R-project.org/},
}

@article{cvitkovic2020supervised,
  title={Supervised learning on relational databases with graph neural networks},
  author={Cvitkovic, Milan},
  journal={arXiv preprint arXiv:2002.02046},
  year={2020}
}

@inproceedings{liu2022feature,
  title={Feature augmentation with reinforcement learning},
  author={Liu, Jiabin and Chai, Chengliang and Luo, Yuyu and Lou, Yin and Feng, Jianhua and Tang, Nan},
  booktitle={2022 IEEE 38th International Conference on Data Engineering (ICDE)},
  pages={3360--3372},
  year={2022},
  organization={IEEE}
}

@inproceedings{hu2020heterogeneous,
  title={Heterogeneous graph transformer},
  author={Hu, Ziniu and Dong, Yuxiao and Wang, Kuansan and Sun, Yizhou},
  booktitle={Proceedings of the web conference 2020},
  pages={2704--2710},
  year={2020}
}

@article{guo2017deepfm,
  title={DeepFM: a factorization-machine based neural network for CTR prediction},
  author={Guo, Huifeng and Tang, Ruiming and Ye, Yunming and Li, Zhenguo and He, Xiuqiang},
  journal={arXiv preprint arXiv:1703.04247},
  year={2017}
}

@misc{acquire-valued-shoppers-challenge,
    author = {DM Dave and B Todd and Will Cukierski},
    title = {Acquire Valued Shoppers Challenge},
    publisher = {Kaggle},
    year = {2014},
    url = {https://kaggle.com/competitions/acquire-valued-shoppers-challenge}
}

@misc{outbrain-click-prediction,
    author = {mjkistler and Ran Locar and Ronny Lempel and RoySassonOB and Rwagner and Will Cukierski},
    title = {Outbrain Click Prediction},
    publisher = {Kaggle},
    year = {2016},
    url = {https://kaggle.com/competitions/outbrain-click-prediction}
}

@article{arda,
author = {Chepurko, Nadiia and Marcus, Ryan and Zgraggen, Emanuel and Fernandez, Raul Castro and Kraska, Tim and Karger, David},
title = {{ARDA}: {A}utomatic Relational Data Augmentation for Machine Learning},
year = {2020},
issue_date = {May 2020},
publisher = {VLDB Endowment},
volume = {13},
number = {9},
journal = {Proc. VLDB Endow.},
month = {May},
pages = {1373–1387},
numpages = {15}
}

@inproceedings{schlichtkrull2018modeling,
  title={Modeling relational data with graph convolutional networks},
  author={Schlichtkrull, Michael and Kipf, Thomas N and Bloem, Peter and Berg, Rianne van den and Titov, Ivan and Welling, Max},
  booktitle={European semantic web conference},
  pages={593--607},
  year={2018},
  organization={Springer}
}

@article{corso2020principal,
  title={Principal neighbourhood aggregation for graph nets},
  author={Corso, Gabriele and Cavalleri, Luca and Beaini, Dominique and Li{\`o}, Pietro and Veli{\v{c}}kovi{\'c}, Petar},
  journal={Advances in Neural Information Processing Systems},
  volume={33},
  pages={13260--13271},
  year={2020}
}

@article{zhang2023gfs,
  title={{GFS}: {G}raph-based feature synthesis for prediction over relational databases},
  author={Zhang, Han and Gan, Quan and Wipf, David and Zhang, Weinan},
  journal={arXiv preprint arXiv:2312.02037},
  year={2023}
}

@article{erickson2020autogluon,
  title={Autogluon-tabular: Robust and accurate automl for structured data},
  author={Erickson, Nick and Mueller, Jonas and Shirkov, Alexander and Zhang, Hang and Larroy, Pedro and Li, Mu and Smola, Alexander},
  journal={arXiv preprint arXiv:2003.06505},
  year={2020}
}

@article{busbridge2019relational,
  title={Relational graph attention networks},
  author={Busbridge, Dan and Sherburn, Dane and Cavallo, Pietro and Hammerla, Nils Y},
  journal={arXiv preprint arXiv:1904.05811},
  year={2019}
}

@inproceedings{zahradnik2023deep,
  title={A deep learning blueprint for relational databases},
  author={Zahradn{\'\i}k, Luk{\'a}{\v{s}} and Neumann, Jan and {\v{S}}{\'\i}r, Gustav},
  booktitle={NeurIPS 2023 Second Table Representation Learning Workshop},
  year={2023}
}

@Inbook{Kramer2001,
author="Kramer, Stefan
and Lavra{\v{c}}, Nada
and Flach, Peter",
editor="D{\v{z}}eroski, Sa{\v{s}}o
and Lavra{\v{c}}, Nada",
title="Propositionalization Approaches to Relational Data Mining",
bookTitle="Relational Data Mining",
year="2001",
publisher="Springer Berlin Heidelberg",
address="Berlin, Heidelberg",
pages="262--291"
}

@article{frasca2020sign,
  title={{SIGN}: {S}calable inception graph neural networks},
  author={Frasca, Fabrizio and Rossi, Emanuele and Eynard, Davide and Chamberlain, Ben and Bronstein, Michael and Monti, Federico},
  journal={arXiv preprint arXiv:2004.11198},
  year={2020}
}

@article{somepalli2021saint,
  title={Saint: Improved neural networks for tabular data via row attention and contrastive pre-training},
  author={Somepalli, Gowthami and Goldblum, Micah and Schwarzschild, Avi and Bruss, C Bayan and Goldstein, Tom},
  journal={arXiv preprint arXiv:2106.01342},
  year={2021}
}

@article{prokhorenkova2018catboost,
  title={CatBoost: Unbiased boosting with categorical features},
  author={Prokhorenkova, Liudmila and Gusev, Gleb and Vorobev, Aleksandr and Dorogush, Anna Veronika and Gulin, Andrey},
  journal={Advances in neural information processing systems},
  volume={31},
  year={2018}
}

@article{hollmann2022tabpfn,
  title={Tabpfn: A transformer that solves small tabular classification problems in a second},
  author={Hollmann, Noah and M{\"u}ller, Samuel and Eggensperger, Katharina and Hutter, Frank},
  journal={arXiv preprint arXiv:2207.01848},
  year={2022}
}

@inproceedings{kramer2020brief,
  title={A brief history of learning symbolic higher-level representations from data (and a curious look forward).},
  author={Kramer, Stefan and Bessiere, C},
  booktitle={IJCAI},
  pages={4868--4876},
  year={2020}
}

@inproceedings{kanter2015deep,
  title={Deep feature synthesis: Towards automating data science endeavors},
  author={Kanter, James Max and Veeramachaneni, Kalyan},
  booktitle={2015 IEEE international conference on data science and advanced analytics (DSAA)},
  pages={1--10},
  year={2015},
  organization={IEEE}
}

@article{fey2023relational,
  title={Relational Deep Learning: Graph Representation Learning on Relational Databases},
  author={Fey, Matthias and Hu, Weihua and Huang, Kexin and Lenssen, Jan Eric and Ranjan, Rishabh and Robinson, Joshua and Ying, Rex and You, Jiaxuan and Leskovec, Jure},
  journal={arXiv preprint arXiv:2312.04615},
  year={2023}
}

@article{zhang2023rdbench,
  title={RDBench: ML Benchmark for Relational Databases},
  author={Zhang, Zizhao and Yang, Yi and Zou, Lutong and Wen, He and Feng, Tao and You, Jiaxuan},
  journal={arXiv preprint arXiv:2310.16837},
  year={2023}
}

@book{garcia2009database,
  title={Database Systems: The Complete Book},
  author={Garcia-Molina, Hector and Ullman, Jeffrey D. and Jennifer Widom},
  year={2009},
  publisher={Prentice Hall}
}

@inproceedings{chen2016xgboost,
  title={Xgboost: A scalable tree boosting system},
  author={Chen, Tianqi and Guestrin, Carlos},
  booktitle={Proceedings of the 22nd acm sigkdd international conference on knowledge discovery and data mining},
  pages={785--794},
  year={2016}
}

@article{gorishniy2021revisiting,
  title={Revisiting deep learning models for tabular data},
  author={Gorishniy, Yury and Rubachev, Ivan and Khrulkov, Valentin and Babenko, Artem},
  journal={Advances in Neural Information Processing Systems},
  volume={34},
  pages={18932--18943},
  year={2021}
}

@inproceedings{kumar2016join,
  title={To join or not to join? thinking twice about joins before feature selection},
  author={Kumar, Arun and Naughton, Jeffrey and Patel, Jignesh M and Zhu, Xiaojin},
  booktitle={Proceedings of the 2016 International Conference on Management of Data},
  pages={19--34},
  year={2016}
}

@inproceedings{robinson2024relational,
  title={Relational deep learning: Graph representation learning on relational databases},
  author={Robinson, Joshua and Ranjan, Rishabh and Hu, Weihua and Huang, Kexin and Han, Jiaqi and Dobles, Alejandro and Fey, Matthias and Lenssen, Jan Eric and Yuan, Yiwen and Zhang, Zecheng and others},
  booktitle={NeurIPS 2024 third table representation learning workshop},
  year={2024}
}

@ARTICLE{Donoho03,
  AUTHOR =       "D.L. Donoho and M. Elad",
  TITLE =        "Optimally Sparse Representation in General (Nonorthogonal) Dictionaries via $\ell_1$ Minimization",
  JOURNAL =      "Proc.~National Academy of Sciences",
  YEAR =         "2003",
  volume =       "100",
  number =       "5",
  pages =        "",
  month =        "",
  note =         "",
  source =       ""
}

@ARTICLE{Wipf04a,
  AUTHOR =       "D.P. Wipf and B.D. Rao",
  TITLE =        "Probabilistic Analysis for Basis Selection via $\ell_p$ Diversity Measures",
  JOURNAL =      "Proc. ICASSP",
  YEAR =         "2004",
  volume =       "6",
  number =       "",
  pages =        "",
  month =        "May",
  note =         "",
  source =       "",
}

@string{graph = "Proc. Graph-Theoretic Concepts in Computer Science"}

@string{vldb =  "Proc. International Conference on Very Large Data Bases"}

@string{sigkdd = "Proc. ACM SIGKDD International Conference on Knowledge Discovery and Data Mining"}

@string{icde = "Proc. IEEE International Conference on Data Engineering"}

@string{computer = "IEEE Computer"}

@article{hollmann2025accurate,
  title={Accurate predictions on small data with a tabular foundation model},
  author={Hollmann, Noah and M{\"u}ller, Samuel and Purucker, Lennart and Krishnakumar, Arjun and K{\"o}rfer, Max and Hoo, Shi Bin and Schirrmeister, Robin Tibor and Hutter, Frank},
  journal={Nature},
  volume={637},
  number={8045},
  pages={319--326},
  year={2025},
  publisher={Nature Publishing Group UK London}
}

@article{zhang2025tabpfn,
  title={Tab{PFN}: {O}ne Model to Rule Them All?},
  author={Zhang, Qiong and Tan, Yan Shuo and Tian, Qinglong and Li, Pengfei},
  journal={arXiv preprint arXiv:2505.20003},
  year={2025}
}

@inproceedings{wanggriffin,
  title={Griffin: {T}owards a Graph-Centric Relational Database Foundation Model},
  author={Wang, Yanbo and Wang, Xiyuan and Gan, Quan and Wang, Minjie and Yang, Qibin and Wipf, David and Zhang, Muhan},
  booktitle={International Conference on Machine Learning},
  year={2025}
}

@article{ranjan2025relational,
  title={Relational Transformer: Toward Zero-Shot Foundation Models for Relational Data},
  author={Ranjan, Rishabh and Hudovernik, Valter and Znidar, Mark and Kanatsoulis, Charilaos and Upendra, Roshan and Mohammadi, Mahmoud and Meyer, Joe and Palczewski, Tom and Guestrin, Carlos and Leskovec, Jure},
  journal={arXiv preprint arXiv:2510.06377},
  year={2025}
}

@article{fey2025kumorfm,
  title={Kumo{RFM}: {A} foundation model for in-context learning on relational data},
  author={Fey, Matthias and Kocijan, Vid and Lopez, Federico and Lenssen, J and Leskovec, Jure},
journal={Online Tech Report},
  year={2025}
}

@inproceedings{ferrini2024meta,
  title={Meta-path learning for multi-relational graph neural networks},
  author={Ferrini, Francesco and Longa, Antonio and Passerini, Andrea and Jaeger, Manfred},
  booktitle={Learning on Graphs Conference},
  pages={2--1},
  year={2024},
  organization={PMLR}
}

@inproceedings{jingangtabicl,
  title={Tab{ICL}: {A} Tabular Foundation Model for In-Context Learning on Large Data},
  author={Jingang, QU and Holzm{\"u}ller, David and Varoquaux, Ga{\"e}l and Le Morvan, Marine},
  booktitle={International Conference on Machine Learning},
  year={2025}
}

@article{zhang2025limix,
  title={Limix: {U}nleashing structured-data modeling capability for generalist intelligence},
  author={Zhang, Xingxuan and Ren, Gang and Yu, Han and Yuan, Hao and Wang, Hui and Li, Jiansheng and Wu, Jiayun and Mo, Lang and Mao, Li and Hao, Mingchao and others},
  journal={arXiv preprint arXiv:2509.03505},
  year={2025}
}

@inproceedings{nagler2023statistical,
  title={Statistical foundations of prior-data fitted networks},
  author={Nagler, Thomas},
  booktitle={International Conference on Machine Learning},
  pages={25660--25676},
  year={2023},
  organization={PMLR}
}

@inproceedings{gan2024graph,
  title={Graph machine learning meets multi-table relational data},
  author={Gan, Quan and Wang, Minjie and Wipf, David and Faloutsos, Christos},
  booktitle={Proceedings of the 30th ACM SIGKDD Conference on Knowledge Discovery and Data Mining},
  pages={6502--6512},
  year={2024}
}

@article{bechler2025position,
  title={Position: {G}raph learning will lose relevance due to poor benchmarks},
  author={Bechler-Speicher, Maya and Finkelshtein, Ben and Frasca, Fabrizio and M{\"u}ller, Luis and T{\"o}nshoff, Jan and Siraudin, Antoine and Zaverkin, Viktor and Bronstein, Michael M and Niepert, Mathias and Perozzi, Bryan and others},
  journal={arXiv preprint arXiv:2502.14546},
  year={2025}
}

@article{zhang2025mitra,
  title={Mitra: {M}ixed synthetic priors for enhancing tabular foundation models},
  author={Zhang, Xiyuan and Maddix, Danielle C and Yin, Junming and Erickson, Nick and Ansari, Abdul Fatir and Han, Boran and Zhang, Shuai and Akoglu, Leman and Faloutsos, Christos and Mahoney, Michael W and others},
  journal={arXiv preprint arXiv:2510.21204},
  year={2025}
}

@misc{retailrocket,
	title={Retailrocket recommender system dataset},
	url={https://www.kaggle.com/dsv/4471234},
	DOI={10.34740/KAGGLE/DSV/4471234},
	publisher={Kaggle},
	author={Roman Zykov and Noskov Artem and Anokhin Alexander},
	year={2022}
}

@article{motl2015ctu,
  title={The CTU prague relational learning repository},
  author={Motl, Jan and Schulte, Oliver},
  journal={arXiv preprint arXiv:1511.03086},
  year={2015}
}

@article{dwivedi2025relational,
  title={Relational Graph Transformer},
  author={Dwivedi, Vijay Prakash and Jaladi, Sri and Shen, Yangyi and L{\'o}pez, Federico and Kanatsoulis, Charilaos I and Puri, Rishi and Fey, Matthias and Leskovec, Jure},
  journal={arXiv preprint arXiv:2505.10960},
  year={2025}
}

@article{team2025gemma,
  title={Gemma 3 {T}echnical {R}eport},
  author={Team, Gemma and Kamath, Aishwarya and Ferret, Johan and Pathak, Shreya and Vieillard, Nino and Merhej, Ramona and Perrin, Sarah and Matejovicova, Tatiana and Ram{\'e}, Alexandre and Rivi{\`e}re, Morgane and others},
  journal={arXiv preprint arXiv:2503.19786},
  year={2025}
}

@article{wydmuch2024tackling,
  title={Tackling prediction tasks in relational databases with {LLM}s},
  author={Wydmuch, Marek and Borchmann, {\L}ukasz and Grali{\'n}ski, Filip},
  journal={arXiv preprint arXiv:2411.11829},
  year={2024}
}

@inproceedings{wu-etal-2025-large,
    title = "Large Language Models are Good Relational Learners",
    author = "Wu, Fang  and
      Dwivedi, Vijay Prakash  and
      Leskovec, Jure",
    booktitle = "Proceedings of the 63rd Annual Meeting of the Association for Computational Linguistics (Volume 1: Long Papers)",
    year = "2025"
}

@article{eremeev2025graphpfn,
  title={Graph{PFN}: {A} prior-data fitted graph foundation model},
  author={Eremeev, Dmitry and Platonov, Oleg and Bazhenov, Gleb and Babenko, Artem and Prokhorenkova, Liudmila},
  journal={arXiv preprint arXiv:2509.21489},
  year={2025}
}

@article{eremeev2025turning,
  title={Turning tabular foundation models into graph foundation models},
  author={Eremeev, Dmitry and Bazhenov, Gleb and Platonov, Oleg and Babenko, Artem and Prokhorenkova, Liudmila},
  journal={arXiv preprint arXiv:2508.20906},
  year={2025}
}

@article{wang20244dbinfer,
  title={{4DBI}nfer: {A} 4{D} benchmarking toolbox for graph-centric predictive modeling on {RDB}s},
  author={Wang, Minjie and Gan, Quan and Wipf, David and Cai, Zhenkun and Li, Ning and Tang, Jianheng and Zhang, Yanlin and Zhang, Zizhao and Mao, Zunyao and Song, Yakun and Wang, Yanbo and Li, Jiahang and Zhang, Han and Yang, Guang and Qin, Xiao and Lei, Chuan and Zhang, Muhan and Zhang, Weinan and Faloutsos, Christos and Zhang, Zheng},
  journal={Advances in Neural Information Processing Systems},
  volume={37},
  pages={27236--27273},
  year={2024}
}

@article{choi2025can,
  title={Can {T}ab{PFN} Compete with {GNN}s for Node Classification via Graph Tabularization?},
  author={Choi, Jeongwhan and Kang, Woosung and Kim, Minseo and Kim, Jongwoo and Park, Noseong},
  journal={arXiv preprint arXiv:2512.08798},
  year={2025}
}

@article{hayler2025bringing,
  title={Bringing Graphs to the Table: {Z}ero-shot Node Classification via Tabular Foundation Models},
  author={Hayler, Adrian and Huang, Xingyue and Ceylan, Ismail Ilkan and Bronstein, Michael and Finkelshtein, Ben},
  journal={arXiv preprint arXiv:2509.07143},
  year={2025}
}

@inproceedings{yoo2023less,
  title={Less is more: {S}lim{G} for accurate, robust, and interpretable graph mining},
  author={Yoo, Jaemin and Lee, Meng-Chieh and Shekhar, Shubhranshu and Faloutsos, Christos},
  booktitle={Proceedings of the 29th ACM SIGKDD Conference on Knowledge Discovery and Data Mining},
  pages={3128--3139},
  year={2023}
}

@article{robinson2024relbench,
  title={Rel{B}ench: {A} benchmark for deep learning on relational databases},
  author={Robinson, Joshua and Ranjan, Rishabh and Hu, Weihua and Huang, Kexin and Han, Jiaqi and Dobles, Alejandro and Fey, Matthias and Lenssen, Jan Eric and Yuan, Yiwen and Zhang, Zecheng and others},
  journal={Advances in Neural Information Processing Systems},
  volume={37},
  pages={21330--21341},
  year={2024}
}

@inproceedings{dwivedi2025relational_kdd,
  title={Relational deep learning: Challenges, foundations and next-generation architectures},
  author={Dwivedi, Vijay Prakash and Kanatsoulis, Charilaos and Huang, Shenyang and Leskovec, Jure},
  booktitle={Proceedings of the 31st ACM SIGKDD Conference on Knowledge Discovery and Data Mining V. 2},
  pages={5999--6009},
  year={2025}
}

@article{choi2025rdb2g,
  title={{RDB2G}-{B}ench: {A} Comprehensive Benchmark for Automatic Graph Modeling of Relational Databases},
  author={Choi, Dongwon and Kim, Sunwoo and Kim, Juyeon and Kim, Kyungho and Lee, Geon and Kang, Shinhwan and Kim, Myunghwan and Shin, Kijung},
  journal={arXiv preprint arXiv:2506.01360},
  year={2025}
}

@article{chen2025autog,
  title={Auto{G}: {T}owards automatic graph construction from tabular data},
  author={Chen, Zhikai and Xie, Han and Zhang, Jian and Tang, Jiliang and Rangwala, Huzefa and Karypis, George and others},
  journal={arXiv preprint arXiv:2501.15282},
  year={2025}
}

@article{xia2024anygraph,
  title={Anygraph: {G}raph foundation model in the wild},
  author={Xia, Lianghao and Huang, Chao},
  journal={arXiv preprint arXiv:2408.10700},
  year={2024}
}

@article{xia2024opengraph,
  title={Opengraph: {T}owards open graph foundation models},
  author={Xia, Lianghao and Kao, Ben and Huang, Chao},
  journal={arXiv preprint arXiv:2403.01121},
  year={2024}
}

@article{pelevska2025tabular,
  title={Tabular Transformers Meet Relational Databases},
  author={Pele{\v{s}}ka, Jakub and {\v{S}}{\'\i}r, Gustav},
  journal={ACM Transactions on Intelligent Systems and Technology},
  volume={16},
  number={5},
  pages={1--24},
  year={2025},
  publisher={ACM New York, NY}
}

@article{ke2017lightgbm,
  title={Light{GBM}: {A} highly efficient gradient boosting decision tree},
  author={Ke, Guolin and Meng, Qi and Finley, Thomas and Wang, Taifeng and Chen, Wei and Ma, Weidong and Ye, Qiwei and Liu, Tie-Yan},
  journal={Advances in neural information processing systems},
  volume={30},
  year={2017}
}

@book{biau2015lectures,
  title={Lectures on the nearest neighbor method},
  author={Biau, G{\'e}rard and Devroye, Luc},
  volume={246},
  year={2015},
  publisher={Springer}
}

@article{grinsztajn2025tabpfn,
  title={TabPFN-2.5: {A}dvancing the state of the art in tabular foundation models},
  author={Grinsztajn, L{\'e}o and Fl{\"o}ge, Klemens and Key, Oscar and Birkel, Felix and Jund, Philipp and Roof, Brendan and J{\"a}ger, Benjamin and Safaric, Dominik and Alessi, Simone and Hayler, Adrian and others},
  journal={arXiv preprint arXiv:2511.08667},
  year={2025}
}

@inproceedings{hoo2024tabular,
  title={The tabular foundation model tabpfn outperforms specialized time series forecasting models based on simple features},
  author={Hoo, Shi Bin and M{\"u}ller, Samuel and Salinas, David and Hutter, Frank},
  booktitle={NeurIPS Workshop on Time Series in the Age of Large Models},
  year={2024}
}

@article{hajek2021ece,
  title={{ECE} 543: {S}tatistical learning theory},
  author={Hajek, Bruce and Raginsky, Maxim},
  journal={Department of Electrical and Computer Engineering and the Coordinated Science Laboratory, University of Illinois at Urbana-Champaign},
  year={2021}
}

@article{RDBLearn2026,
  title={{RDBL}earn: {S}imple In-Context Prediction Over Relational Databases},
  author={Zhang, Yanlin and Xu, Linjie and Gan, Quan and Wipf, David and Wang, Minjie},
  journal={ar{X}iv preprint},
  year={2026}
}

@article{gu2026relbench,
  title={Rel{B}ench v2: {A} Large-Scale Benchmark and Repository for Relational Data},
  author={Gu, Justin and Ranjan, Rishabh and Kanatsoulis, Charilaos and Tang, Haiming and Jurkovic, Martin and Hudovernik, Valter and Znidar, Mark and Chaturvedi, Pranshu and Shroff, Parth and Li, Fengyu and others},
  journal={ar{X}iv preprint ar{X}iv:2602.12606},
  year={2026}
}

@article{bui2025effectiveness,
  title={On the effectiveness of random weights in graph neural networks},
  author={Bui, Thu and Sch{\"o}nlieb, Carola-Bibiane and Ribeiro, Bruno and Bevilacqua, Beatrice and Eliasof, Moshe},
  journal={arXiv preprint arXiv:2502.00190},
  year={2025}
}

@article{degraeve2022r,
  title={R-{GCN}: {T}he {R} could stand for random},
  author={Degraeve, Vic and Vandewiele, Gilles and Ongenae, Femke and Van Hoecke, Sofie},
  journal={arXiv preprint arXiv:2203.02424},
  year={2022}
}

@inproceedings{xu2023not,
  title={Do not train it: {A} linear neural architecture search of graph neural networks},
  author={Xu, Peng and Zhang, Lin and Liu, Xuanzhou and Sun, Jiaqi and Zhao, Yue and Yang, Haiqin and Yu, Bei},
  booktitle={International Conference on Machine Learning},
  pages={38826--38847},
  year={2023}
}

@inproceedings{amazon-reviews,
  title={Justifying recommendations using distantly-labeled reviews and fine-grained aspects},
  author={Ni, Jianmo and Li, Jiacheng and McAuley, Julian},
  booktitle={Proceedings of the 2019 conference on empirical methods in natural language processing and the 9th international joint conference on natural language processing (EMNLP-IJCNLP)},
  pages={188--197},
  year={2019}
}

@inproceedings{kim2026refuge,
  title={Re{F}u{G}e: Feature Generation for Prediction Tasks on Relational Databases with LLM Agents},
  author={Kim, Kyungho and Lee, Geon and Kim, Juyeon and Choi, Dongwon and Kang, Shinhwan and Shin, Kijung},
  booktitle={Proceedings of the ACM Web Conference},
  pages={8501--8504},
  year={2026}
}

@inproceedings{zhang2024elf,
  title={Elf-{G}ym: {E}valuating large language models generated features for tabular prediction},
  author={Zhang, Yanlin and Li, Ning and Gan, Quan and Zhang, Weinan and Wipf, David and Wang, Minjie},
  booktitle={Proceedings of the 33rd ACM International Conference on Information and Knowledge Management},
  pages={5420--5424},
  year={2024}
}

@article{han2024large,
  title={Large language models can automatically engineer features for few-shot tabular learning},
  author={Han, Sungwon and Yoon, Jinsung and Arik, Sercan O and Pfister, Tomas},
  journal={arXiv preprint arXiv:2404.09491},
  year={2024}
}

@inproceedings{raasveldt2019duckdb,
  title={Duck{DB}: {A}n embeddable analytical database},
  author={Raasveldt, Mark and M{\"u}hleisen, Hannes},
  booktitle={International Conference on Management of Data},
  pages={1981--1984},
  year={2019}
}

@article{li2025agent,
  title={Agent {B}ain vs.~{A}gent {M}c{K}insey: {A} New Text-to-{SQL} Benchmark for the Business Domain},
  author={Li, Yue and Tao, Ran and Hommel, Derek and D{\"o}nder, Yusuf Denizay and Chang, Sungyong and Mimno, David and Jo, Unso Eun Seo},
  journal={arXiv preprint arXiv:2510.07309},
  year={2025}
}
\bibliographystyle{icml2026}

\newpage
\appendix
\onecolumn

\section{Extended Related Work} \label{app:related_work}

Although we have already cited key references needed to understand and contextualize our contributions in the main text, there are several additional points worth bolstering here along with supporting references.

\paragraph{Growing relevance of RDB predictive modeling.}  As of 2026, the majority of database management systems are relational,\footnote{\url{https://db-engines.com/en/ranking_categories}} and the information stored within contains countless possibilities for predictive modeling.  Traditionally, developing such models was predicated on some form of propositionalization \cite{ARDA,kanter2015deep,Kramer2001,kramer2020brief,kumar2016join,liu2022feature}, whereby fixed-length features are first extracted and aggregated within a single table such that standard tabular learning models can then be applied.  The latter range from diverse boosting approaches \cite{chen2016xgboost,ke2017lightgbm,prokhorenkova2018catboost} to deep learning frameworks like DeepFM \cite{guo2017deepfm}, SAINT \cite{somepalli2021saint}, and FT-Transformers \cite{gorishniy2021revisiting}.  

More recently, with the advent of graph neural networks and wide-ranging graph Transformer models, there has been a notable shift towards end-to-end systems applied to graphs extracted from RDBs \cite{cvitkovic2020supervised,dwivedi2025relational,fey2023relational,pelevska2025tabular,wang20244dbinfer,zahradnik2023deep,zhang2023gfs}.  The underlying graph extraction process has also been the subject of ongoing exploration specifically for improving RDB prediction quality \cite{chen2025autog,choi2025rdb2g,gan2024graph}.  Of particular note, it has even been posited that the relevance of graph learning as a research domain in and of itself will fade unless focus is redirected towards (among other things) data originating in RDBs \cite{bechler2025position}. For now though, evidence collected thus far suggests that in supervised settings, these recent end-to-end relational frameworks generally tend to outperform propositionalization, and conventional wisdom treats the latter as a bottleneck to be avoided where possible \cite{dwivedi2025relational_kdd}.  Of course as we have shown both analytically and empirically, this need not still be the case when we turn to foundation models like RDBLearn formulated through ICL.  See also Appendix \ref{app:graph_results} where we broaden the scope of these observations to graph foundation models.

\paragraph{ICL and synthetic pre-training for RDB learning.} For pure ICL-based RDB foundation models based on synthetic generation during pre-training, there is presently only the KumoRFM approach \cite{fey2025kumorfm}. However, as a closed-source model there are few details available that might otherwise enable thorough assessment by the research community.  For example, while pre-training was achieved using a mixture of synthetic and real-world RDB data, it is unclear to what extent the real-world portion maintains structural or task similarity with the limited evaluation benchmarks.  This calls into question how generalizable KumoRFM actually is in practice.  Nor is the synthetic generation pipeline used by KumoRFM available for scrutiny that might inform its potential for widespread efficacy.  We remark that generating synthetic RDBs that reflect real-world properties is challenging, with unavoidable dependency on extra relational dimensions of variability not shared by successful single-table models \cite{gan2024graph}.

\paragraph{Zero-shot possibilities.}  Mirroring ambiguity shared across the graph learning literature \cite{eremeev2025turning,xia2024anygraph,xia2024opengraph}, there does not appear to be a widely agreed upon  definition of what constitutes true zero-shot RDB predictive modeling.  Broadly speaking though, zero-shot learning refers to settings whereby the model must predict test samples involving classes that were not available during training.  This is possible in situations where there exists suitable auxiliary information (e.g., textual attribute descriptions) that implicitly differentiate new classes.  

In the context of RDBs specifically, the relational transformer (RT) model has been framed as possessing zero-shot capabilities, provided zero-shot relational learning is defined as ``predicting new targets on a new RDB with a new schema, without weight updates'' as proposed by \citet{ranjan2025relational}.  Per this definition our RDBLearn framework would also technically qualify as zero-shot.  But in such relational cases the auxiliary information relied upon for making predictions (by both RT and RDBLearn) includes exposure to entity labels with earlier time-stamps, e.g., from the entity we wish to classify and/or those extracted from prescribed neighborhood(s).  This setup is conceptually a bit like label propagation on a temporal graph, which is not universally recognized as zero-shot per se, although admittedly it is reasonable to consider broader definitions as long as stipulations are clear.

Regardless of definitions, if RT pre-training is not explicitly conducted using each test RDB, the performance drops appreciably, even approaching a naive baseline whereby the historical entity mean serves as the prediction.  For example, on RelBench-v1 classification tasks, this historical mean estimator achieves 66.7 average AUC, while comparable RT performance is 70.1 AUC; see Table 1 in \citet{ranjan2025relational}.  In contrast, if we strictly enforce no target labels (past or present) available at inference time as a requirement for zero-shot relational learning, then the Griffin model \cite{wanggriffin} still technically qualifies, although performance without per-dataset fine-tuning is not competitive (64 average AUC under equivalent settings).

Finally, the RelLLM model \cite{wu-etal-2025-large} has also been described as a zero-shot learner when per-dataset fine-tuning is omitted.  However, RelLLM is still pre-trained over the very same RelBench-v1 datasets upon which testing is conducted (even if the pre-training targets may vary).  Moreover, without per-dataset fine-tuning, RelLLM falls behind even Griffin (63.2 average AUC).  We reiterate though,  outside of the definition proposed by \citet{ranjan2025relational}, our RDBLearn would generally not be considered a pure zero-shot method given its reliance on ICL samples.  Even so, unlike these other approaches, RDBLearn does \textit{not} depend on seeing the actual inference-time RDBs during pre-training to achieve SOTA performance. Rather, it is entirely based on a synthetic pre-training pipeline whereby there is no possible leakage or favorable bias introduced from inference-time RDBs.

\section{Empirical Corroboration of Dense Encoder Performance Degradation} \label{app:encoder_sim_experiment}

\vspace{-0.2cm}

\begin{figure}[h]
\centering
\includegraphics[width=1.0\columnwidth]{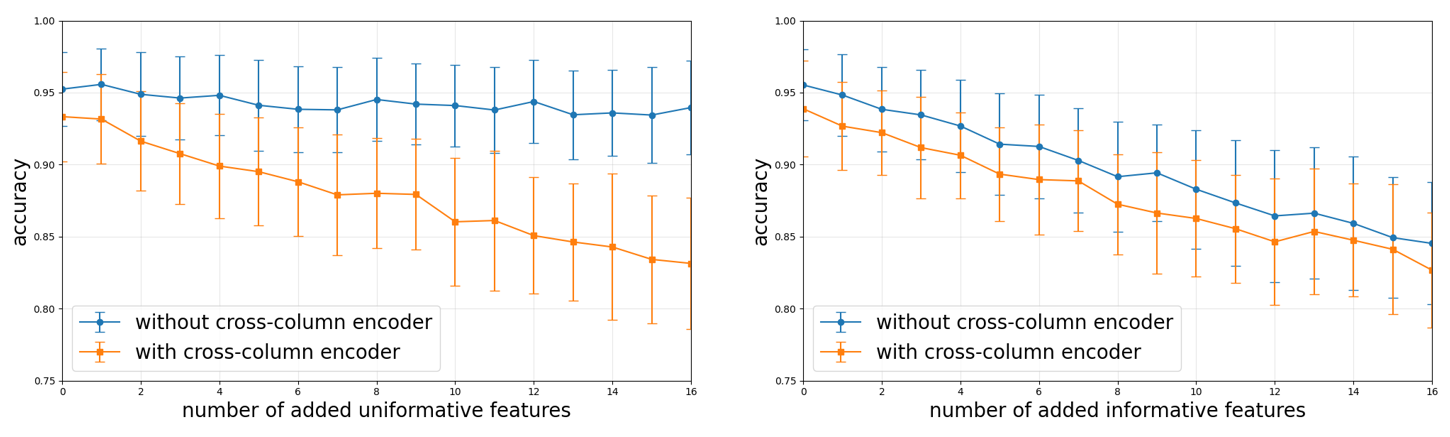}
\caption{Impact of adding uninformative (\textit{left}) versus informative (\textit{right}) feature columns; results averaged over 100 trials. }
\label{fig:uninformative_impact}
\end{figure}

To further explore our analytical findings from Section \ref{sec:analysis}, we conducted the following simulation study involving two distinct testing scenarios.  For the first, we initially generate ICL samples $\{\bX,\by\} = \{\bx_{i:},y_i \}_{i=1}^{n}$, where elements of $\bX$ are drawn iid from $\calN(0,1)$ and $y_i = \mathbb{I}\big[1/\big(1+\exp[-\bw^\top \bx_{i:}]\big) \big]$ is a binary class label.  The weight vector $\bw$ is also drawn iid from $\calN(0,1)$.  For all $i$ we then compute representations $\bz_{i:} = g_\phi(\bx_{i:})$ that mix cross-column information (unlike JUICE).  This encoder is composed of a randomized linear filter, followed by a leaky-ReLU nonlinearity, and another random linear filter. By design this encoder representation is invertible so no information is lost.

We next compute the ICL-based prediction accuracy of a decoder $f_\theta$ (implemented here via TabPFNv2) at new test points $\{\bx_{\tiny \mbox{test}}, y_{\tiny \mbox{test}}\}$ as additional uninformative columns are randomly inserted into $\bX$ and the corresponding elements of $\bx_{\tiny \mbox{test}}$.  We repeat the same procedure using ICL samples $\{\bZ,\by\}$ evaluated at corresponding test points $\{\bz_{\tiny \mbox{test}}, y_{\tiny \mbox{test}}\}$.  Figure \ref{fig:uninformative_impact}(\textit{left}) displays the results (averaged over 100 independently generated datasets for each plotted point) with and without the addition of encoder $g_\phi$.  Of particular note, when no uninformative features are added (i.e., where the x-axis is zero on the left-hand plot), the encoder mixing has no appreciable effect; however, as more distracting columns are added, a clear negative trend emerges as expected per the analysis of Section \ref{sec:analysis}.

Meanwhile, our second testing scenario operates as a form of control.  Specifically, instead of adding uninformative features to $\bX$ as before, we now introduce additional columns upon which a revised $\by$ computation now explicitly depends (this is accomplished by extending the length of $\bw$ during generation).  In this regime, the core prediction problem naturally becomes harder, since the decision function becomes increasingly complex and high-dimensional.  But critically, the inclusion of the cross-column encoder now has little effect as shown in Figure \ref{fig:uninformative_impact}(\textit{right}).  This is because the encoder is no longer mixing informative and non-informative features that would otherwise complicate the predictive task faced by $f_\theta$.

\begin{figure}[t]
\centering
\includegraphics[width=0.6\columnwidth]{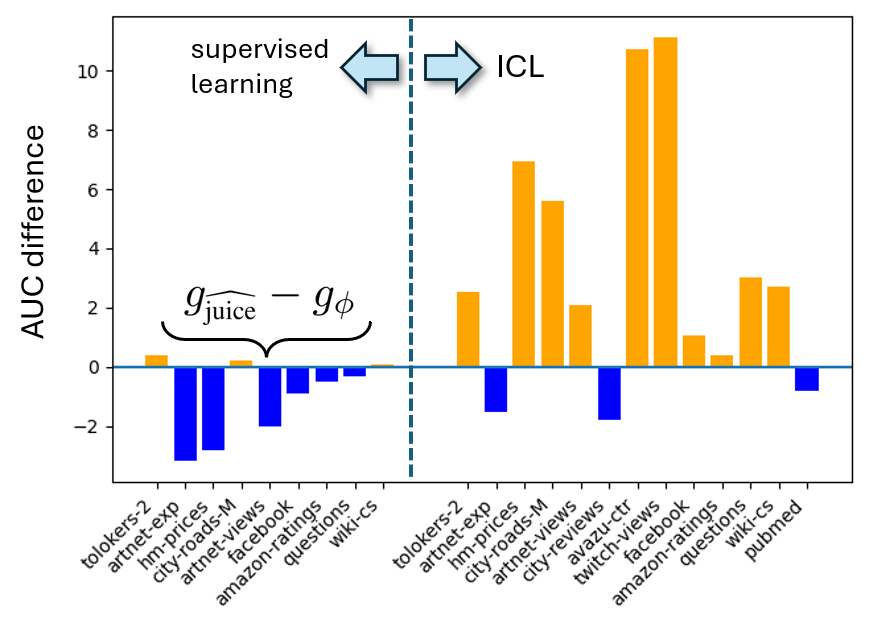}
\caption{Performance inversion of JUICE-like embeddings on homogeneous graph datasets.  Each bar plot is based on results extracted from \citet{eremeev2025graphpfn} involving GraphPFN and G2T models formed with a LimiX ICL decoding architecture.}
\label{fig:ICL_contrast2}
\vspace*{-0.3cm}
\end{figure}

\section{Lessons from Graph Foundation Models Suggest that JUICE Is Worth the Squeeze} \label{app:graph_results}

In Figure \ref{fig:ICL_contrast} from Section \ref{sec:experiments} we empirically demonstrated how the relative performance between a  dense parameterized RDB encoder $g_\phi$ and our $g_{\tiny \mbox{juice}}$  dramatically shifts when moving from supervised learning to ICL settings.  Ideally we would like to extend these results via testing over additional RDB foundation models beyond KumoRFM, upon which Figure \ref{fig:ICL_contrast} is based.  However, because no other comparable frameworks exist, it is unfortunately not possible to do so without developing our own completely new synthetic generation and pre-training pipeline specific to RDBs.  Of course it remains an open question how to actually implement these modules, and well beyond the scope of this paper (recall that KumoRFM is a closed-source industry model with an undisclosed pre-training process).

Fortunately though, there does exist work on \textit{graph foundation models} that we can leverage to bootstrap an analogous comparison, allowing us to further establish that the SL-to-ICL performance inversion predicted by our analysis represents a wide-ranging phenomena.  Although not a perfect surrogate, data from homogeneous attributed graphs can be viewed as isomorphic to simplified RDBs involving a single data table type.

To this end, we now introduce a novel re-combination of performance results from the GraphPFN model \cite{eremeev2025graphpfn}, which relies on a dense parameterized encoder $g_\phi$ to compress neighborhood subgraph information, and the G2T approach \cite{eremeev2025turning}, which uses a parameter-free encoder with similarities to JUICE.  We loosely refer to the latter as $g_{\tiny \widehat{\mbox{juice}}}$, as neighboring node features are aggregated in a column-wise fashion; likewise for additional structured features such as node degree and PageRank score.

What is particularly attractive about this setup is that we have access to performance results whereby the core ICL decoder is shared across all models.  This restricts variation along the two dimensions we wish to probe, namely, the $g_{\tiny \widehat{\mbox{juice}}} - g_\phi$ performance (difference) under supervised learning, and $g_{\tiny \widehat{\mbox{juice}}} - g_\phi$ under ICL.  This is possible because the GraphPFN model is pre-trained from a LimiX check-point \cite{zhang2025limix}, and results from Tables 2 and 3 in \citet{eremeev2025graphpfn} cover both ICL and fine-tuning (i.e., per-dataset supervision) cases.  Meanwhile, these same tables include a LimiX version of G2T under the same conditions.  Hence we pull these results and compute the respective differences stratified over each dataset and learning type as shown in Figure \ref{fig:ICL_contrast2} (the number of supervised learning datasets is fewer since some cases ran OOM).  

Just as in Figure \ref{fig:ICL_contrast}, which involves completely different datasets and models, Figure \ref{fig:ICL_contrast2} reveals the same clear performance inversion predicted for JUICE-based embeddings.  This consistency helps solidify our conclusions and widens the applicability of underlying JUICE design principles, suggesting that even in the realm of graph foundation models, dense parameterized encoders need not be necessary, at least outside of fine-tuning or supervised training regimes.  We also remark that graph foundation models relying on parameter-free graph features served to single-table foundation models (like G2T) have been proposed in multiple concurrent works \cite{choi2025can,eremeev2025turning,hayler2025bringing}. However, in all of these cases the design is motivated by simplicity and guided by natural precursors from supervised graph learning \cite{frasca2020sign,yoo2023less}.  Unlike herein, prior work does \textit{not} establish formal principles for grounding such approaches, nor explicit elucidation of the critical distinction between supervised and ICL usage thereof.

\section{Arbitrating Homophily vs Heterophily Relationships} \label{app:network_effects}

Analogous to the feature- and meta-path-level ambiguities already discussed in the main text, there exists additional sources of uncertainty with respect to how network effects manifest in RDB predictive tasks.  Depending on the task and dataset, unknown labels may be influenced by either heterophily or homophily relationships between neighboring nodes captured by distinct meta-paths.  In the context of heterogeneous graphs defining RDB relations, homophily refers to the tendency of neighboring nodes of the same node type (not necessarily 1-hop neighbors) sharing the same label, while heterophily represents the converse.  Differentiating such network effects is possible during supervised training on a per-dataset basis as gradients back-propagate through $q_\theta$ to $g_\phi$.  In this way it is possible in principle for a single architecture, albeit with dataset-specific weights, to handle varying degrees of homophily on a case by case basis.  But the ICL setting is completely different.  When no per-dataset label-aware gradients flow to $g_\phi$, the encoder is blind to the importance of distinct meta-paths or the extent to which homophily vs heterophily dominates, and therefore is incapable of selectively preserving features that favor one effect over another.  Hence suitable representations for \textit{both} scenarios are needed prior to the ICL decoder.  The JUICE philosophy suggests that these should be confined to distinct columns to the extent possible, such that subsequent ICL-based decoding is not unduly complex.  Exploring this topic represents an interesting direction for future research.

\section{RDBLearn Framework Usage} \label{app:toolbox}

The RDBLearn toolbox can be accessed at the following link:

\url{https://github.com/HKUSHXLab/rdblearn}

Alternatively, to use the affiliated RDBLearn skill, agents can fetch and follow instructions from: 

\url{https://github.com/HKUSHXLab/rdblearn/blob/main/SKILLS_INSTALL.md}

Figure~\ref{fig:rdblearn} illustrates the intended RDBLearn workflow.  See also companion documentation \cite{RDBLearn2026} for a more gentle introduction to RDBLearn application as well as comprehensive details regarding its implementation using DuckDB\footnote{\url{https://duckdb.org/}} as discussed in \citet{raasveldt2019duckdb} and FeatureTools\footnote{\url{https://github.com/alteryx/featuretools}} as introduced by \citet{kanter2015deep}. In brief here, a user provides a task table $\bX$ (whose rows identify target records), associated labels $\by$, and a broader relational database context. The estimator then automatically performs relational featurization following the spirit of JUICE and subsequently invokes a tabular ICL backend for prediction.  Other particulars are as follows:

\begin{figure}[h]
\centering
\includegraphics[width=0.5\columnwidth]{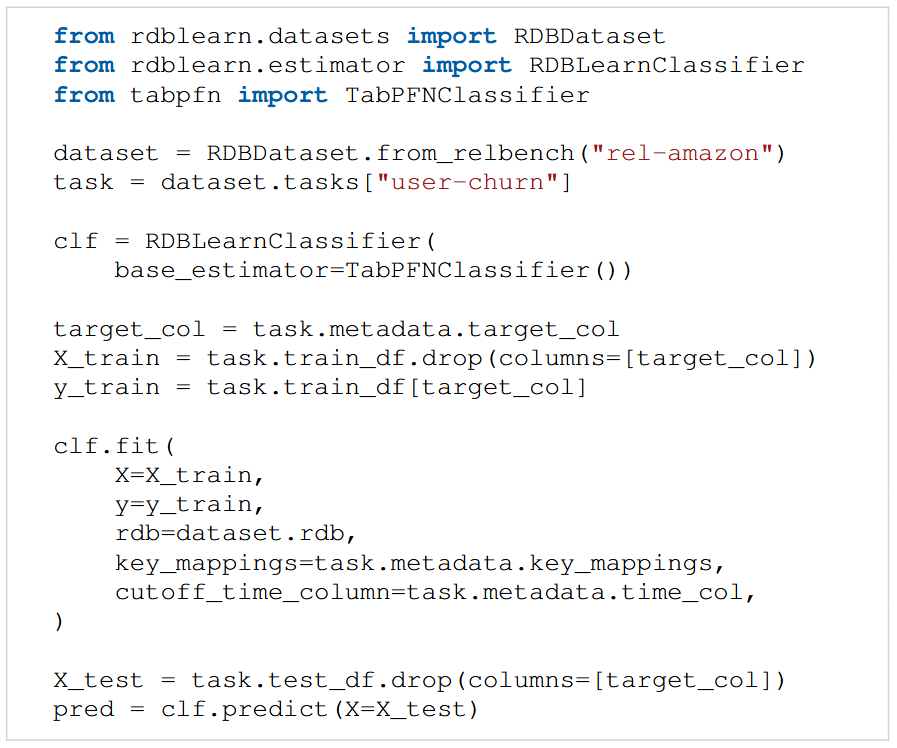}
\caption{Minimal usage of \textsc{RDBLearn} on a relational benchmark task.}
\label{fig:rdblearn}
\end{figure}

\begin{itemize}
\item \textit{Relational database context}: The argument \texttt{rdb} corresponds with an RDB $\{\calT,\calR \}$, i.e., a collection of tables and
their relations. Each instance $\bx$ (a row in $\bX$) is interpreted relative to this database context.

\item \textit{ICL samples}:
The labeled training set $\{\bX,\by\}$ supplied to \texttt{fit} provides the seed for internally computing in-context labeled examples
$\calD = \{\bZ,\by\}$ used by the tabular ICL backend (see below for more details). At prediction time, the backend conditions on these
examples to estimate $y_{\tiny \mbox{test}}$ for each query row in $\bX_{\tiny \mbox{test}}$.

\item \textit{Associating instances with relational context}:
\textsc{RDBLearn} exposes two complementary ways to associate each target instance with relevant relational
records. Firstly, \texttt{key\_mappings} provides \emph{schema-based anchoring} by mapping columns in $\bX$ to primary
keys in entity tables, thereby specifying which records are structurally reachable through relations in
$\calR$. Secondly, \texttt{cutoff\_time\_column} provides \emph{time-conditioned context} by restricting the neighborhood
to records whose timestamps are \textit{strictly less than} the instance-associated cutoff time, a common requirement for leakage-free relational prediction in temporal settings. Together, these associations determine the relational neighborhood $\calG_H(\bx)$ used by relational featurization.

\item \textit{Relational featurization and tabular prediction}:
Given $\calG_H(\bx)$, RDBLearn constructs a fixed-length representation $\bz$ internally. The downstream \texttt{base\_estimator} then operates on the resulting tabular representation, and can be swapped as long as it follows the scikit-learn estimator interface.

\end{itemize}

\section{Experiment Details} \label{app:experiment_details}

\subsection{RDBLearn Setup} 

\paragraph{Encoding settings.} For instantiating JUICE, we fix the activation function $\sigma$ to be linear and choose $\mbox{agg}$ functions $\{\mbox{sum}, \mbox{mean}, \mbox{mode}, \min, \max, \mbox{std} \}$ for continuous-valued columns and $\{\mbox{counts}, \mbox{mode} \}$ for categorical columns.  These common/intuitive selections remain unchanged across all benchmarks, datasets, and tasks reported herein.  That being said, our preliminary testing (not shown) actually indicates that including additional aggregation functions can improve performance.  However, we defer further exploration to future work.

We explicitly enable date/time features encoded as temporal differences to reduce OOD effects.  However, we have chosen to simply disable all raw text, special text, and $n$-gram features.  Interestingly, RDBLearn still achieves SOTA foundation model performance without these, indicating that textual attributions may not be central to making good predictions on current benchmarks (certainly this is often true of tabular data more broadly).  In the future though, we can of course easily include a pre-trained language encoder to featurize text analogous to any other column type to boost prediction quality.

\paragraph{ICL Decoder settings.}  We evaluate three foundation model variants for the RDBLearn ICL-based decoding step: 
\begin{enumerate}
    \item \textbf{TabPFNv2} (checkpoint: tabpfn-v2-classification/regression-finetuned-zk73skhh);

    \item \textbf{TabPFN v2.5} (checkpoint: tabpfn-v2.5-classification/regression-default); 
    \item  \textbf{Limix} (checkpoint: LimiX-16M).   
\end{enumerate}
For consistency over all benchmarks we randomly down-sample training splits to 10k.  Although some ICL base models can handle up to 50k samples, and this limit is regularly increasing, we found that 10k was sufficient for good performance.

\paragraph{Equipment.}  All experiments were conducted on a single NVIDIA 4080 GPU with 32GB memory.

\subsection{RelBench-v1 Testing}  

We follow the official temporal splits (train, dev, test) from \citet{robinson2024relbench} to facilitate direct comparison with existing published work.  Moreover, results presented herein represent tuned models as reported by original authors with one exception.  Griffin  model results in Figure \ref{fig:relbench_class_res} were obtained from \citet{ranjan2025relational}, where head-to-head alignment with the relational transformer was conducted under so-called zero-shot settings (see Appendix \ref{app:related_work} for further discussion of zero-shot definitions).  The Griffin paper itself \cite{wanggriffin} does not include this type of experimentation, focusing more on optimizing performance through per-dataset fine-tuning.

\paragraph{Benchmark leakage issues.}  As part of RelBench-v1 \cite{robinson2024relbench}, the rel-event dataset has been found to have temporal leakage issues \cite{ranjan2025relational}.  There is also reason to believe that rel-f1 may be compromised as well, given that fine-tuned models are capable of essentially perfect accuracy (99.62 AUC) \cite{fey2025kumorfm} despite these data being tied to sporting events (F1 racing) with non-negligible degrees of uncertainty over temporal splits.  For these reasons, and following related studies elsewhere, we omit rel-event and rel-f1 from our empirical comparisons.

\paragraph{Regression reproducibility.}  Although promising RelBench-v1 regression results are reported in \citet{ranjan2025relational} for a reduced set of models, the MAE metric is not used (e.g., as was used previously by KumoRFM).  Moreover, for most datasets we were not able to reproduce the reported $R^2$ metric results even for the simple baseline ``entity mean'' estimator (and at the time of this writing, public code is not available for doing so).\footnote{We remark that the entity mean represents a critical signal for the RT model as shown in ablations from \citet{ranjan2025relational}.}  The two exceptions are the rel-trial dataset tasks, namely, site-success and study-adverse.  On these two datasets RDBLearn outperforms both Griffin and RT as shown in Figure \ref{fig:regression_extra}.

\begin{figure}[h]
\centering
\includegraphics[width=0.35\columnwidth]{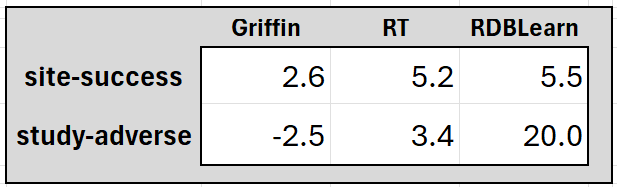}
\caption{Regression results ($R^2$ scores, higher is better) on rel-trial tasks from RelBench-v1.}
\label{fig:regression_extra}
\end{figure}

\subsection{4DBInfer Testing}   

Following \citet{wang20244dbinfer}, we adopt baselines formed from widely-used heterogeneous GNN architectures, including  \textbf{R-SAGE} or \textbf{R-GCN} \cite{schlichtkrull2018modeling}, \textbf{R-GAT} \cite{busbridge2019relational}, \textbf{HGT}, \cite{hu2020heterogeneous}, and \textbf{R-PNA} \cite{corso2020principal}.  Each model type is then independently paired with each of two graph extraction techniques.  The first is \textbf{R2N} (row-to-node), initially introduced by \citet{cvitkovic2020supervised} and discussed in Section \ref{sec:rdb_predictions}.  The second is \textbf{R2N/E} (row-to-node/edge) as detailed in \citet{gan2024graph}.  The importance of exploring multiple graphs is now well-established \cite{choi2025rdb2g}.  Collectively, this results in a total of 8 supervised baselines as listed in Figure \ref{fig:4dbinfer_class_res}, along with Griffin and RandGNN as representative foundation models executed by ourselves.  Other more traditional baselines reported in \citet{wang20244dbinfer} have generally worse accuracy than these.

For specific datasets, we focus on 5 of the 4DBInfer classification tasks.  These include click-through-rate prediction on Outbrain \cite{outbrain-click-prediction}, user churn on Amazon Book Reviews \cite{amazon-reviews}, user churn and post popularity prediction on StackExchange,\footnote{\url{https://data.stackexchange.com/}} and conversion prediction on RetailRocket \cite{retailrocket}.  For reference, the raw results across all baselines and tasks are shown in Table \ref{tab:4dbinfer_class_raw}.

\begin{table*}[h]
\centering
\caption{Raw AUC values for 4DBInfer classification tasks used in producing Figure \ref{fig:4dbinfer_class_res}.}
\label{tab:4dbinfer_class_raw}
\resizebox{\textwidth}{!}{%
\begin{tabular}{ll|cccccccc|ccc}
\toprule
& & \multicolumn{8}{c|}{\textbf{Supervised}} & \multicolumn{3}{c}{\textbf{Foundation Model}} \\
\cmidrule(lr){3-10} \cmidrule(lr){11-13}
\textbf{Dataset} & \textbf{Task} & GAT(R2N) & GAT(R2N/E) & HGT(R2N) & HGT(R2N/E) & PNA(R2N) & PNA(R2N/E) & SAGE(R2N) & SAGE(R2N/E) & Griffin & RandGNN & RDBLearn \\
\midrule
Amazon         & Churn        & 0.7622 & 0.7192 & 0.773  & 0.6864 & 0.7645 & 0.7157 & 0.7571 & 0.7314 & 0.5332 & 0.7742 & 0.7741 \\
Outbrain       & CTR-100K     & 0.6146 & 0.6308 & 0.626  & 0.6323 & 0.6249 & 0.6322 & 0.6239 & 0.6271 & 0.5222 & 0.4929 & 0.5447 \\
Retailrocket   & CVR          & 0.8284 & 0.7536 & 0.8495 & 0.8342 & 0.8367 & 0.8427 & 0.847  & 0.8091 & 0.3992 & 0.6621 & 0.8469 \\
StackExchange & post-upvote  & 0.8853 & 0.6883 & 0.8817 & 0.6603 & 0.8896 & 0.7045 & 0.8861 & 0.6798 & 0.7166 & 0.6106 & 0.8845 \\
               & user-churn   & 0.8645 & 0.8528 & 0.867  & 0.856  & 0.8664 & 0.8657 & 0.8558 & 0.8485 & 0.7218 & 0.7741 & 0.8796 \\
\bottomrule
\end{tabular}%
}
\end{table*}

\paragraph{Timing considerations.}  For many baselines, direct timing comparisons are elusive because of different computing environments and other confounds.  That being said, we have confirmed that a single training run involving 4DBInfer baseline models described above operates on the scale of $10^3 - 10^4$ seconds.  Combined with the 100-fold hyperparameter sweep (e.g., covering number of layers, hidden dimension, learning rate, etc.) needed to achieve reported results, the overall budget enters the $10^5 - 10^6$ second range.  Meanwhile on comparable machines (a single NVIDIA 4080 GPU with 32GB memory) total RDBLearn latency with the same benchmarks is on the order of $10^2$ seconds.  This latency can be improved with further optimizations, but doing so lies outside the scope of the present work.

\newpage
\section{Technical Proofs}

\begin{proposition} \label{prop:no_parms2}
If $\sigma$ and $\operatorname{agg}$ are positively homogeneous functions, we initialize embeddings using (\ref{eq:1D_subgraph}), and $\{w_0^{(h)}, w^{(h)} \} \in \mathbb{R}_+$ ~$\forall h$, then without loss of generality (\ref{eq:MP-GNN_layer}) can be reparameterized with no internal weights. 
\end{proposition}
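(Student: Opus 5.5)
We argue by induction on the layer index $h$, pushing all positive weights outward until they collapse into a single scalar per node, which can then be absorbed downstream. The key facts are these. A positively homogeneous $\sigma$ satisfies $\sigma(c\,a) = c\,\sigma(a)$ for $c>0$. A positively homogeneous permutation-invariant $\operatorname{agg}$ satisfies $\operatorname{agg}[\{c\,a_u\}] = c\,\operatorname{agg}[\{a_u\}]$ for $c>0$. Because node features are 1D scalars, every weight $w_0^{(h)}, w^{(h)}$ is itself a positive scalar rather than a matrix.

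The inductive claim is that for every layer $h$ there exist weight-free embeddings $\nu^{(h)}_v$ and positive scalars $c^{(h)}$ (independent of $v$) such that $\mu^{(h)}_v = c^{(h)}\,\nu^{(h)}_v$. The $\nu^{(h)}_v$ are generated by a fixed recursion involving only $\sigma$, $\operatorname{agg}$, and possibly a fixed nonnegative mixing ratio (see below).

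\emph{Base case.} At $h=0$, take $c^{(0)} = 1$ and $\nu^{(0)} = \mu^{(0)}$ from (\ref{eq:1D_subgraph}).

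\emph{Inductive step.} Substituting into (\ref{eq:MP-GNN_layer}) gives
\[
\mu^{(h+1)}_v = \sigma\Big(w_0^{(h)}c^{(h)}\nu_v^{(h)} + w^{(h)}c^{(h)}\operatorname{agg}\big[\{\nu^{(h)}_u\}_{u\in\calN_v^{(h)}}\big]\Big).
\]
Factor out $a^{(h)} = w^{(h)} c^{(h)} > 0$ and apply homogeneity of $\sigma$. This yields
\[
\mu^{(h+1)}_v = a^{(h)}\,\sigma\big(\lambda^{(h)}\nu^{(h)}_v + \operatorname{agg}[\cdot]\big), \qquad \lambda^{(h)} = w_0^{(h)}/w^{(h)}.
\]
Setting $c^{(h+1)} = a^{(h)}$ closes the induction.

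The main obstacle is the residual ratio $\lambda^{(h)}$, which does not factor out. I would dispose of it in one of two ways.

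\emph{Option 1: the meta-path structure removes it.} The initialization (\ref{eq:1D_subgraph}) zeroes all node features outside table $k$, and meta-paths traverse each table at most once. Along the path, the self-term $\nu_v^{(h)}$ of an intermediate-table node is therefore either zero, or the term is structurally a function of the same single source column. In the simplest case the self-loop contributes nothing before the signal arrives. After that, it can be folded in by treating $\lambda\nu_v + \operatorname{agg}$ as a single positively homogeneous aggregator over the multiset that includes $v$.

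\emph{Option 2: widen the aggregator family.} Absorb $\lambda^{(h)}$ into a redefined aggregator, or fix $\lambda^{(h)}\in\{0,1\}$ as the paper's remark suggests. Full expressiveness is then recovered by including multiple aggregator variants, consistent with the JUICE concatenation strategy.

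Finally, the output-layer factor $c^{(H)}$ is a global positive scale per column of $\bZ$. It is absorbed into $q_\theta$ or removed by standard per-feature normalization, which completes the argument.
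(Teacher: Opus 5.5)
Your factoring of $w^{(h)}c^{(h)}$ through $\sigma$ and $\operatorname{agg}$ is fine, and you correctly identify the ratio $\lambda^{(h)}=w_0^{(h)}/w^{(h)}$ as the obstruction. But removing that ratio is the entire content of the proposition, and there is a genuine gap here: neither option removes it. In Option~1, the step ``after that, it can be folded in by treating $\lambda\nu_v+\operatorname{agg}$ as a single positively homogeneous aggregator'' does not eliminate a weight. That aggregator is still parameterized by $\lambda$ (e.g.\ a weighted sum of self and neighbors), so you have only renamed the internal weight. Option~2 either does the same thing, or it restricts $\lambda\in\{0,1\}$, which changes the model class and is not ``without loss of generality.'' The paper's remark about revised $\sigma$ and $\operatorname{agg}$ concerns recovering sign information, not the self/neighbor ratio.

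The missing idea is that the mixed case you are trying to fold in never occurs. At step $h$ the neighborhoods $\calN_v^{(h)}$ come from a single edge type of $\rho_H$. Since the meta-path visits each table at most once, each node has a nonempty neighborhood at no more than one step. Before that step its embedding is still zero:
\begin{itemize}
\item nodes outside table $k$ start at $0$ by (\ref{eq:1D_subgraph});
\item $0$ is preserved because positive homogeneity forces $\sigma(0)=0$ (with $\operatorname{agg}$ over an empty set taken as $0$);
\item nodes of table $k$ itself never receive messages.
\end{itemize}
Hence at every step and every node only one term of (\ref{eq:MP-GNN_layer}) survives. The update is either $w^{(h)}\sigma(\operatorname{agg}[\{\mu_u^{(h)}\}_{u\in\calN_v^{(h)}}])$ or $w_0^{(h)}\sigma(\mu_v^{(h)})$, so no sum $\lambda\nu_v+\operatorname{agg}[\cdot]$ ever has to be factored. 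This is exactly the paper's two-case argument.

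Note also that once you split cases this way, your invariant of a single scale $c^{(h)}$ shared by all $v$ no longer holds. Source and already-visited nodes pick up $w_0$ factors, while freshly reached nodes pick up $w$ factors. You should therefore either carry one scale per table (weights are shared within a node type), or observe that the target's output is a composition of these positively homogeneous maps along the path. In fact the target only reads freshly reached nodes, which gives the overall factor $\prod_h w^{(h)}$, and that single scalar can then be absorbed into $q_\theta$ or removed by normalization.
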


\vspace*{0.2cm}
\textit{Proof:}  We begin by examining two special cases and assuming that $\sigma$ and $\mbox{agg}$ are positively homogeneous of degree 1.  First, if $\mu_v^{(h)} = 0$, then the node $v$ update from (\ref{eq:MP-GNN_layer}) is equivalent to
\begin{equation} \label{eq:MP-GNN_layer2}
 \mu_v^{(h+1)}  = \sigma \left( \mbox{agg}\left[\left\{w^{(h)} \mu_u^{(h)} \right\}_{u \in \calN^{(h)}_v} \right]   \right) =  w^{(h)} \sigma \left( \mbox{agg}\left[\left\{ \mu_u^{(h)} \right\}_{u \in \calN^{(h)}_v} \right]   \right)
\end{equation}
since both $\sigma$ and $\mbox{agg}$ are positively homogeneous and $w^{(h)} \geq 0$ by assumption.  Alternatively, if instead $\calN^{(h)}_v = \emptyset$, meaning node $v$ has no neighbors at point $h$ on the meta-path, then 
\begin{equation} \label{eq:MP-GNN_layer3}
   \mu_v^{(h+1)}  = \sigma \left( w^{(h)}_0 \mu_v^{(h)}  \right) =  w^{(h)}_0 \sigma \left(  \mu_v^{(h)}  \right).
\end{equation}
By definition of any $H$-hop meta-path and our corresponding meta-path GNN, we execute (\ref{eq:MP-GNN_layer}) $H$ times.  Combined with the proposed initialization strategy given by (\ref{eq:1D_subgraph}) and assumption of no loops along meta-paths mentioned below (\ref{eq:MP-GNN_layer}), every propagation step that results reduces to either (\ref{eq:MP-GNN_layer2}) or (\ref{eq:MP-GNN_layer3}).  To see this, note that at any step $h$ along a meta-path, by definition $\calN_v^{(h)}$ only depends on a single active edge-type for all $v$, so any given node can only have neighbors once along a meta-path; elsewhere the update reduces to (\ref{eq:MP-GNN_layer3}).  And for a node having neighbors at a specific step along a meta-path, the corresponding node embedding prior to seeing neighbors will be zero by virtue of the initialization scheme.  In this way, when neighbors do occur, (\ref{eq:MP-GNN_layer2}) prevails.  Hence the final embeddings produced at each node associated with rows of $\bX$ will be compositions of (\ref{eq:MP-GNN_layer2}) and (\ref{eq:MP-GNN_layer3}).  Given that a composition of positively homogeneous functions is also positively homogeneous, all weights can be pulled out in front without loss of generality.   \myendofproof

\vspace*{0.5cm}

\begin{proposition} \label{prop:compression2}
Assume $\pi : \mathbb{R}^\kappa \rightarrow \mathbb{R}$ is affine, with weights in general position.  Then for any $n > \kappa$,  there exists a fixed ICL decoder $f_\theta$ such that
\begin{equation} \label{eq:decoder_only_error2}
    \big| y_{\tiny \mbox{test}} - f_\theta\big(\{\bx_{i:}, y_i \}_{i=1}^n, \bx_{\tiny \mbox{test}} \big) \big|  = 0
\end{equation}
with probability one over datasets $\calD(\pi)$ generated according to Definition \ref{def:UFM}.  Meanwhile, for any $r < d$, and \underline{any possible} fixed encoder-decoder pair $\{g_\phi, f'_\theta \}$, we have
\begin{equation} \label{eq:encoder-decoder_error2}
   \big| y_{\tiny \mbox{test}} - f'_\theta\big(\{\bz_{i:}, y_i\}_{i=1}^n, \bz_{\tiny \mbox{test}} \big) \big|  > 0, ~~~ \bz = g_\phi( \bx) \in \mathbb{R}^r
\end{equation}
with probability at least $1- \tbinom{r}{k}/ \tbinom{d}{k}$ over the same data generative distribution.
\end{proposition}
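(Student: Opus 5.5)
The plan has two parts: an explicit decoder construction for the first claim, and a counting argument over coordinate subsets for the second. The counting argument is built around the linear-algebraic structure of the encoder's (local) row space. Throughout, write $S\subset\{1,\ldots,d\}$, $|S|=\kappa$, for the random index set of Definition \ref{def:UFM}. Write $\pi(\widetilde{\bx}) = \bw^\top \bx_S + b$ with $\bw$ in general position, meaning in particular that every entry of $\bw$ is nonzero. I will read the $k$ in the stated bound as $\kappa$.

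\emph{Decoder-only part.} I will take $f_\theta$ to be a sparsest-affine-interpolation decoder. It enumerates all subsets $S'$ of size at most $\kappa$, in order of increasing size. For each $S'$ it solves the linear system $y_i = \bw'^\top \bx_{iS'} + b'$ for $i=1,\ldots,n$. It returns the first consistent fit, evaluated at $\bx_{\tiny \mbox{test}}$. This is a fixed map of the ICL set and the test point, so it qualifies as a fixed ICL decoder. The true $S$ always yields a consistent fit. It remains to show that, with probability one, every consistent fit agrees with the truth at $\bx_{\tiny \mbox{test}}$. For any $S'$ that misses some $j\in S$, the condition that $y_i$ lies in the affine span of $\{\bx_{iS'}\}$ is a nontrivial polynomial constraint on the $x_{ij}$ with $w_j\neq 0$. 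Under an absolutely continuous $p(\bx)$ this constraint holds with probability zero once $n$ exceeds $|S'|+1$.

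The delicate point is the boundary regime $n=\kappa+1$. There an affine fit on any $\kappa$ coordinates interpolates generically. I therefore expect to need either $n\ge\kappa+2$, or to treat $\pi$ as linear (intercept known), so that $n>\kappa$ suffices. I will state this explicitly and otherwise proceed as above. With uniqueness in hand, the returned fit equals $\pi$ and (\ref{eq:decoder_only_error2}) holds almost surely.

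\emph{Encoder--decoder part.} Fix $g_\phi:\mathbb{R}^d\to\mathbb{R}^r$ and $f'_\theta$. I will call $S$ \emph{recoverable} if the decoder can be exact with positive probability when $S$ is drawn. Exactness at a random test point forces $y_{\tiny \mbox{test}}=\bw^\top\bx_{\tiny \mbox{test},S}+b$ to be a function $h$ of $\bz_{\tiny \mbox{test}}=g_\phi(\bx_{\tiny \mbox{test}})$ on a set of positive measure. Here $h$ may depend on the ICL samples but not on $\bx_{\tiny \mbox{test}}$. Assuming $g_\phi$ and $h$ are differentiable (see below), differentiating gives
\begin{equation} \nonumber
\bw_S^\top = \nabla h^\top\, D g_\phi(\bx)
\end{equation}
restricted to the coordinates in $S$, with zeros elsewhere. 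So the vector $\bw$ embedded in $\mathbb{R}^d$ lies in the row space $V(\bx)$ of $Dg_\phi(\bx)$, which has dimension at most $r$. Because $\bw$ is in general position, this can happen for a non-negligible set of $\bw$ only if $V(\bx)\supseteq \mathrm{span}\{e_j: j\in S\}$. A subspace of dimension at most $r$ contains at most $r$ coordinate vectors. Hence at most $\tbinom{r}{\kappa}$ subsets $S$ are recoverable. Since $S$ is uniform over $\tbinom{d}{\kappa}$ subsets, the error is strictly positive with probability at least $1-\tbinom{r}{\kappa}/\tbinom{d}{\kappa}$.

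\emph{Main obstacle.} The hardest step is justifying the differentiability and row-space reduction for an arbitrary encoder. A merely measurable $g_\phi$ can be injective from $\mathbb{R}^d$ into $\mathbb{R}^r$, for example via digit interleaving, so the claim genuinely needs a regularity assumption. I would first assume $g_\phi$ is locally Lipschitz and that the effective map $h$ is Lipschitz. I would then use Rademacher's theorem, so that the gradient identity holds almost everywhere. Finally, I would argue that the set of coordinates contained in $V(\bx)$ can be taken constant on a positive-measure region, so that the count of recoverable subsets is uniform in $\bx$.
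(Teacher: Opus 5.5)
\textbf{The gap is in the final counting step of your encoder--decoder argument.} You call $S$ recoverable if the decoder is exact with positive probability when $S$ is drawn, and you conclude that at most $\tbinom{r}{\kappa}$ subsets are recoverable. That conclusion is false even within the Lipschitz class you propose. Take $d=2$, $r=\kappa=1$, and let $g_\phi(\bx)=x_1$ on $\{x_1\le 0.4\}$ and $g_\phi(\bx)=2+x_2$ on $\{x_1\ge 0.6\}$, interpolated linearly in $x_1$ on the strip between. Consider a decoder that reads off $S$ from the ICL pairs and outputs $wz+b$ or $w(z-2)+b$. It is exact on the first region when $S=\{1\}$ and on the second region when $S=\{2\}$, so both subsets are recoverable although $\tbinom{1}{1}=1$. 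Your proposed remedy is to make the set of coordinate axes in $V(\bx)$ constant on a positive-measure region. That cannot give a count uniform in $\bx$, because the row space of $Dg_\phi$ genuinely changes across regions. The bound itself survives (here the exactness probability is at most $1/2$), but the count must be done pointwise in the test input and then averaged. The repair runs as follows.
\begin{enumerate}
\item The ICL inputs and the test input have the same law for every $S$ (only $\by$ depends on $S$). So fix the ICL inputs and let $A_S$ be the set of test inputs where the decoder is exact under $S$.
\item Take a Lebesgue density point $\bx\in A_S$ at which $g_\phi$ is differentiable. For $\bx'\in A_S$, the inequality $|\bw^\top(\bx'-\bx)_S|\le L_h\|g_\phi(\bx')-g_\phi(\bx)\|$ already forces $\bw$ (embedded in $\mathbb{R}^d$) to be orthogonal to $\ker Dg_\phi(\bx)$. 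This avoids the chain rule, which Rademacher does not license when $g_\phi$ maps $A_S$ into a null set.
\item Genericity then gives $\mathrm{span}\{e_j:j\in S\}\subseteq V(\bx)$. Hence $\#\{S:\bx\in A_S\}\le\tbinom{r}{\kappa}$ for a.e.\ $\bx$.
\item Averaging yields $\Pr(\text{exact})=\tbinom{d}{\kappa}^{-1}\mathbb{E}\,\#\{S:\bx_{\tiny \mbox{test}}\in A_S\}\le\tbinom{r}{\kappa}/\tbinom{d}{\kappa}$.
\end{enumerate}

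\textbf{A second point needs to be made explicit.} Your genericity step (``this can happen for a non-negligible set of $\bw$ only if\ldots'') holds for each fixed $\bx$, whereas the proposition has a single fixed $\bw$. Passing to ``for a.e.\ $\bx$'' needs a Fubini swap over $(\bx,\bw)$, and hence an encoder chosen independently of $\bw$. State this assumption, because an encoder tailored to $\pi$ can beat the bound. For example, take $d=3$, $r=\kappa=2$, with the selected coordinates kept in index order; then $\bz=(w_1x_1+w_2x_2,\;w_1x_1+w_2x_3)$ is exact for two of the three supports.

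\textbf{Comparison with the paper.} With these repairs your route is sound and more careful than the paper's. The paper argues informally that an $r$-dimensional encoder ``can reconstruct at most $r$ columns,'' and that exactness requires reconstructing every column in $S$. The first claim fails for merely measurable encoders, as your interleaving remark shows. The second is stronger than needed, and your row-space condition correctly replaces it. Your decoder-only part is the paper's support-enumeration argument. Your intercept caveat at $n=\kappa+1$ is legitimate: the paper's proof implicitly treats $b$ as known by writing $\by-b=\bX\bw$.
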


\vspace*{0.2cm}
\textit{Proof:}  We split the proof into two parts.

\paragraph{Establishing (\ref{eq:decoder_only_error2}).}  By assumption, the stacked ICL samples are generated as $\by = \bX \bw + b = \widetilde{\bX} \widetilde{\bw} + b$, where $\widetilde{\bw} \in \mathbb{R}^\kappa$ and $b \in \mathbb{R}$ are affine model parameters associated with $\pi$.  Meanwhile $\bw$ is simply $\widetilde{\bw}$ padded with $d- \kappa$ zeros.  From here, we first assume the $\kappa < n \leq d$. Let $\bar{\bX}_n$ denote any set of $n$ columns selected from $\bX$.  Per the sampling process used to generate $\bX$, it follows that $\mbox{rank}[\bar{\bX}_n] = n$ almost surely for any such $\bar{\bX}_n$; for reference, this is equivalent to the condition $\mbox{spark}[\bX] = n+1$ almost surely \citep{Donoho03}.  Note that if it were that $\mbox{rank}[\bar{\bX}_n] < n$ with non-negligible probability, then it must be that $p(\bX)$ has unbounded density on a subspace within $\mathbb{R}^d$, which is disallowed by construction.

Then, by extension of Lemma 2 from \citet{Wipf04a}, it follows that $\by - b = \bX \bw$ is the unique equality involving a sparse weight vector satisfying $\| \bw \|_0 < n$.  Hence the decoder can check each combination of $n$ columns of $\bX$ extracted from ICL samples to see if a sparse $\bw$ vector allows for reconstructing observable labels $\by$ (note that there are more efficient ways to obtain $\bw$ with additional assumptions; however, this is not necessary here for the proof).  As any vector $\bw$ so-obtained is unique and aligned with the ground-truth process, the model can then predict $y_{\tiny \mbox{test}}$ label at any test point $\bx_{\tiny \mbox{test}}$.  

Lastly, if $n \geq d$, then $\mbox{rank}[\bX] = d$.  Hence the ground-truth generative weights satisfy $\bw = \bX^{\dag} \by$ such that again, perfect estimation of new test points is possible as before.

\paragraph{Establishing (\ref{eq:encoder-decoder_error2}).}  The distribution of $\bX$ is the same for each generated dataset (only the distribution of $\by$ changes by design).  With an $r$-dimensional output, $q_\theta$ can reconstruct at most $r$-dimensions of $\bX$.  Per the assumed generative process, the proportion of datasets whereby all $\kappa$ columns fall within any group of $r$ fixed columns of $\bX$ is $\tbinom{r}{k}/ \tbinom{d}{k}$.  Hence the encoder can achieve zero estimation error recovering  in $\tbinom{r}{k}/ \tbinom{d}{k}$ proportion of cases by $y_{\tiny \mbox{test}}$ by reconstructing any set of $r$ columns of $\bX$, leaving a failure ratio of $1- \tbinom{r}{k}/ \tbinom{d}{k}$.  But can we do any better than this?

Note that it is not possible to perfectly reconstruct any one coordinate of a given $\bx$ from the others, i.e., there does not exist a function $\psi : \mathbb{R}^{d-1} \rightarrow \mathbb{R}$ such that $x_j = \psi(\bx_{\setminus j})$ almost surely, where $\bx_{\setminus j}$ denotes the elements of $\bx$ excluding the $j$-th coordinate.  (If such a function existed, then the density $p(\bx)$ would be unbounded.) Consequently, to achieve zero estimation error at test points, the encoder must be capable of reconstructing all dimensions of $\bx$ associated with nonzero elements in $\bw$.  And since the encoder is required to stay fixed for all datasets, at best it can reconstruct $r$ columns, and so the success ratio from above cannot be improved upon.  \myendofproof

\vspace*{0.5cm}

\begin{proposition} \label{prop:lossy_rates}
Let $\calF$ denote the set of Lipschitz continuous functions on $[0,1]^\kappa$.  Then there exists an ICL decoder $f_\theta$ such that (excluding smaller-order terms) we have
\begin{equation} \label{eq:rate1}
\sup_{\pi \in \calF} \mathbb{E}_{\calD(\pi) \sim p} \Big[ y_{\tiny \mbox{test}} - f_\theta\big(\{\bx_{i:}, y_i \}_{i=1}^n, \bx_{\tiny \mbox{test}} \big) \Big]^2  = O\left(n^{-2/\kappa} \right),
    \end{equation}
where the data distribution $p$ is given by Definition \ref{def:UFM} with $\calX = [0,1]$.  In contrast, with $\kappa = 1$ and $y = \widetilde{x}$, we have
\begin{equation} \label{eq:rate2}
\inf_{f_\theta} \sup_{g_\phi \in \calB} \mathbb{E}_{\calD(\pi) \sim p} \Big[ y_{\tiny \mbox{test}} - f_\theta\big(\{\bz_{i:}, y_i \}_{i=1}^n, \bz_{\tiny \mbox{test}} \big) \Big]^2  = \Theta\left(n^{-2/d} \right),
    \end{equation}
where $\bz = g_\phi(\bx)$ represents an encoder selected from the set of bi-Lipschitz bijections $\calB$ mapping $[0,1]^d \rightarrow [0,1]^d$.
\end{proposition}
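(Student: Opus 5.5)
The plan has two halves: an upper bound via subset selection followed by $k$-NN regression, and a matching minimax statement for the encoder case via a reduction to $d$-dimensional nonparametric regression.

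\textbf{First rate.} I would construct $f_\theta$ in two stages. In the first stage the decoder identifies the informative coordinates from the ICL samples. For Lipschitz $\pi$ it should suffice to pick, among all $\tbinom{d}{\kappa}$ candidate index sets $S$, the one minimizing a hold-out (or leave-one-out) nearest-neighbor residual of $\by$ regressed on $\bX_{:,S}$. In the second stage the decoder runs $k$-NN regression on the selected $\kappa$ coordinates, with $k \asymp n^{2/(2+\kappa)}$ or simply $k=1$ in the noiseless case. Since labels are noiseless and $p$ has density bounded in $[p_{\min},p_{\max}]$ on $[0,1]^d$, the marginal on any $\kappa$ coordinates also has bounded density. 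Standard nearest-neighbor distance bounds (e.g.\ \citet{biau2015lectures}) then give $\mathbb{E}\|\widetilde{\bx}_{\tiny \mbox{test}} - \widetilde{\bx}_{(1)}\|^2 = O(n^{-2/\kappa})$. Lipschitz continuity converts this into squared error $O(L^2 n^{-2/\kappa})$, uniformly over $\pi\in\calF$ with a fixed Lipschitz constant.

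The cost of a wrong subset selection, or the tiebreak among subsets that explain $\by$ equally well, is handled by two facts: errors are bounded, and the selection-error probability decays faster than the rate. I would absorb that term into the ``smaller-order terms'' the statement excludes. Alternatively, one can avoid the selection step entirely by taking the decoder to be the Bayes/oracle-style estimator that averages over candidate subsets, weighted by fit.

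\textbf{Second rate, upper bound.} Since $g_\phi$ is a bi-Lipschitz bijection onto $[0,1]^d$, the pushforward of $p$ has density bounded above and below with constants depending on the bi-Lipschitz constants. The target $y = \widetilde{x} = (g_\phi^{-1}(\bz))_j$ is then a Lipschitz function of $\bz$. Applying $1$-NN regression in the full $d$-dimensional $\bz$-space gives $O(n^{-2/d})$ uniformly over $\calB$ with fixed constants.

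\textbf{Second rate, lower bound (main obstacle).} The hard part is showing that no decoder beats $n^{-2/d}$ when the sup is taken over $\calB$. I would use a standard Assouad/Fano packing argument. Partition $[0,1]^d$ into $m^d$ cubes with $m \asymp n^{1/d}$, so each cube contains $O(1)$ samples in expectation. For each sign pattern $\boldsymbol{\epsilon}\in\{\pm1\}^{m^d}$, build a bi-Lipschitz map $g_{\boldsymbol\epsilon}$ such that the induced function $h_{\boldsymbol\epsilon}(\bz) = (g_{\boldsymbol\epsilon}^{-1}(\bz))_j$ equals a baseline Lipschitz function plus $\epsilon_c\, m^{-1}\psi(m(\bz - \bc))$ on each cube $c$, where $\psi$ is a smooth bump. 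This requires the perturbation to be realizable by a bijection. Concretely, I would define $g_{\boldsymbol\epsilon}^{-1}$ as a small local shear of the $j$-th coordinate by a bump of amplitude $\asymp 1/m$ and slope $O(1)$. That map is a diffeomorphism of each cube fixing the boundary, with uniformly bounded bi-Lipschitz constants.

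With such maps in hand, the test point lands in a cube containing no ICL sample with constant probability, and on that event the sign $\epsilon_c$ is unidentifiable. This forces squared error $\gtrsim m^{-2} \asymp n^{-2/d}$. Two points need care. First, the law of $\bz$ must not reveal $\boldsymbol\epsilon$; I would make the shear measure-preserving, or accept bounded density ratios and control them in the Assouad step. Second, the perturbation must stay inside $[0,1]^d$. Combining the lower and upper bounds yields $\Theta(n^{-2/d})$.
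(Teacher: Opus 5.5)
\textbf{First rate.} Your argument is the paper's construction: hold-out selection over the $\tbinom{d}{\kappa}$ coordinate subsets, 1-NN on the selected coordinates, then the Biau--Devroye nearest-neighbour distance bound combined with Lipschitz continuity. However, the step controlling the selection cost fails as stated. Bounding that cost by (probability of a wrong pick)$\times$(bounded error) cannot work under $\sup_{\pi\in\calF}$. The function $\pi$ may depend arbitrarily weakly, or not at all, on one of its $\kappa$ inputs. Then there is no uniform separation between the true subset and its competitors, and the probability of a ``wrong'' pick need not decay at all.

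What must be controlled is the excess risk of the selected predictor, through an oracle inequality. The paper uses Hoeffding plus a union bound. That adds $O(\sqrt{\log\tbinom{d}{\kappa}/n})$, which is lower order only for $\kappa>4$, as the paper itself notes.

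Noiseless labels let you do better. Write $R(S')$ for the risk of $f_{S'}$. Each hold-out loss $(y-f_{S'}(\bx))^2$ is nonnegative and bounded by $B=L^2\kappa$, so its variance is at most $B$ times its mean. Bernstein plus a union bound then give $R(S^*)\le 3\min_{S'}R(S')+O(B\log\tbinom{d}{\kappa}/n)$ with high probability. The oracle risk is itself $O(n^{-2/\kappa})$, so the constant factor is harmless, and the additive term is at most of the same order for every $\kappa\ge 2$. Your exponential-weights alternative, made precise as a mirror-averaging aggregate, gives the same $O(\log\tbinom{d}{\kappa}/n)$ excess in expectation.

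Also drop the choice $k\asymp n^{2/(2+\kappa)}$. With noiseless labels its bias is of order $(k/n)^{2/\kappa}=n^{-2/(2+\kappa)}$, so only $k=O(1)$ attains the claimed rate.

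\textbf{Second rate.} Here your route genuinely differs from the paper's and is the more careful one. For the upper bound, running 1-NN directly in $\bz$-space avoids the paper's ``search over the unknown $i$'', because $[g^{-1}(\bz)]_j$ is Lipschitz in $\bz$ for every $j$.

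For the lower bound, the paper simply invokes the minimax rate for Lipschitz regression with design density bounded below. That leaves two things unchecked. First, the hard instances must be realizable as $[g^{-1}]_j$ with $g\in\calB$. Second, the design law $g_\#p$ moves with the hypothesis, and one must show it gives the decoder no extra information. Your Assouad construction with cube-preserving shears verifies both.

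One correction when you write it out. The shear $z_j\mapsto z_j+\epsilon_c m^{-1}\psi(m(\bz-\mathbf c))$ has Jacobian determinant $1+\epsilon_c(\partial_j\psi)(m(\bz-\mathbf c))\neq 1$, so it is not measure-preserving. For a general $p$, nothing would make $g_\#p$ free of $\boldsymbol\epsilon$ anyway, so take the bounded-ratio route:
\begin{itemize}
\item With $\|\nabla\psi\|_\infty\le 1/2$, each cube is mapped onto itself. So the cell counts of the ICL sample do not depend on $\boldsymbol\epsilon$.
\item The within-cell density of $\bz_{\mathrm{test}}$ under $\epsilon_c=\pm1$ has ratio at most $3p_{\max}/p_{\min}$.
\item Consider the event that the test cell holds no ICL sample; its probability is bounded below when $m^d\asymp n$. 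On this event the posterior weight $w$ on $\epsilon_c=+1$ lies in a fixed interval $[a,1-a]$.
\item Every prediction then incurs conditional squared error at least $w(1-w)\bigl(2m^{-1}\psi(m(\bz_{\mathrm{test}}-\mathbf c))\bigr)^2$, which integrates to order $m^{-2}\asymp n^{-2/d}$.
\end{itemize}
Since the informative coordinate is drawn uniformly, shearing a single coordinate costs only a factor $1/d$.
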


\vspace*{0.2cm}
\textit{Proof:}  The proof is segmented into two parts as follows.

\paragraph{Establishing (\ref{eq:rate1}).} This bound follows by piecing together well-known results from learning theory.  We begin by splitting into equal halves a given set of training samples $\{\bx_{i:}, y_i \}_{i=1}^n $ from some $\calD(\pi)$ with $S$ fixed. The high-level strategy is to define a set of nearest-neighbor estimators with the first half, and then select from among these estimators by enlisting the second half.  Existing sample complexity results can then be applied to produce the final result.

To this end, for each subset of $\kappa$ elements from $d$ total input feature dimensions, we  define the 1-nearest-neighbor estimator
\begin{equation} \label{eq:1NN}
    f_{S'}(\bx) = y_{i^*(\bx,S')}, ~~~ \mbox{with}~~ i^*(\bx,S') = \arg\min_{ i < \lceil n/2 \rceil } \Big\|   \bx_{S'} - (\bx_{i:})_{S'} \Big\|
\end{equation}
and subscript $S'$ denoting that only elements of any $\bx$ within this subset are included.  We then form a selection function $f_\theta(\bx) = f_{S^*}(\bx)$ where
\begin{equation} \label{eq:selector}
S^* = \arg \min_{S' \in \tbinom{[d]}{\kappa}} \left[ \sum_{i=\lceil n/2 \rceil}^{n} \Big(y_i -  f_{S'}(\bx_{i:}) \Big)^2 \right].
\end{equation}
Regardless of how each constituent $f_{S'}$ was originally constructed, the subsequent selector based on (\ref{eq:selector}) follows the oracle inequality
\begin{eqnarray} \label{eq:first_bound}
    \mathbb{E}\left[\Big( y_{\tiny \mbox{test}} - f_{S^*}(\bx_{\tiny \mbox{test}}) \Big)^2 \Big| S, \{\bx_{i:}, y_i \}_{i <\lceil n/2 \rceil}  \right] & \leq & \min_{S'}\mathbb{E}\left[\Big( y_{\tiny \mbox{test}} - f_{S'}(\bx_{\tiny \mbox{test}}) \Big)^2 \Big| S, \{\bx_{i:}, y_i \}_{i <\lceil n/2 \rceil} \right] + O\left(\sqrt{\frac{\log \tbinom{d}{\kappa}}{n}} \right)  \nonumber \\
    & \leq & \mathbb{E}\left[\Big( y_{\tiny \mbox{test}} - f_{S}(\bx_{\tiny \mbox{test}}) \Big)^2  \Big| S, \{\bx_{i:}, y_i \}_{i <\lceil n/2 \rceil} \right] + O\left(\sqrt{\frac{\log \tbinom{d}{\kappa}}{n}} \right),  
\end{eqnarray}
where the expectation is over $\{\bx_{\tiny \mbox{test}}, y_{\tiny \mbox{test}} \}$ and $\{\bx_{i:}, y_i \}_{i=\lceil n/2 \rceil}^{n}$ following dataset generative process from Definition \ref{def:UFM} but for now with $S$ fixed. (i.e., the samples used in (\ref{eq:1NN}) are also treated as fixed here).  See for example Section 5.4 of \citet{hajek2021ece} for that standard and Hoeffding and union bounding process used to establish the first inequality in (\ref{eq:first_bound}); the second inequality trivially follows from removing the min operator.

We next need to bound the r.h.s.~expectation within (\ref{eq:first_bound}).  Adopting $i^*$ as shorthand for $i^*(\bx_{\tiny \mbox{test}},S)$, we have
\begin{equation}
   \big|y_{\tiny \mbox{test}} - f_{S}(\bx_{\tiny \mbox{test}}) \big| =  \big|y_{\tiny \mbox{test}} - y_{i^*} \big| \leq L \big\|(\bx_{\tiny \mbox{test}})_S - (\bx_{i^*})_S \big\|,
\end{equation}
which follows from the assumed Lipschitz continuity of $\pi$, recalling that $y = \pi\big( \bx_S \big)$ by definition.  We next square both sides and take an expectation over the samples $\{\bx_{i:}, y_i \}_{i <\lceil n/2 \rceil}$ used in (\ref{eq:1NN}) as well as $\bx_{\tiny \mbox{test}}$.  These operations lead to
\begin{equation} \label{eq:NN_bound}
    \mathbb{E}  \left[  \big( y_{\tiny \mbox{test}} - f_{S}(\bx_{\tiny \mbox{test}}) \big)^2 \big| S \right]  \leq L^2 \mathbb{E} \left[ \big\|(\bx_{\tiny \mbox{test}})_S - (\bx_{i^*})_S \big\|^2 \big| S \right] \leq O\left(n^{-2/\kappa} \right),
\end{equation}
where the right-most inequality stems from standard properties of nearest neighbors (e.g., see Theorem 2.1 in \citet{biau2015lectures}).  Hence we can insert (\ref{eq:NN_bound}) into (\ref{eq:first_bound}) after taking the expectation of both sides of the former w.r.t.~$\{\bx_{i:}, y_i \}_{i <\lceil n/2 \rceil}$.  And lastly, because the resulting bound also holds for any given $S$, it also hold for all $S \in \tbinom{[d]}{\kappa}$.  Hence we arrive at
\begin{equation} \label{eq:final_bound}
    \mathbb{E}  \left[  \big( y_{\tiny \mbox{test}} - f_{S^*}(\bx_{\tiny \mbox{test}}) \big)^2 \right]   \leq O\left(n^{-2/\kappa} \right) + O\left(\sqrt{\frac{\log \tbinom{d}{\kappa}}{n}} \right),
\end{equation}
with expectation over the entire $\calD(\pi)$ generative process.  This expression  is dominated by the first term for $\kappa > 4$ as $n$ becomes large, completing the proof.

\paragraph{Establishing (\ref{eq:rate2}).}  Per the specified setup, we have $y = \widetilde{x} = [g_\phi^{-1}(\bz)]_i$ for some coordinate $i$, where $\bz = g_\phi(\bx)$.  For now we will assume that $i=1$. Because $g_\phi$ is assumed to be a bi-Lipschitz, $g_\phi^{-1}$ is also Lipschitz over all of its coordinates, including the first.  Therefore $h(\bz) = [g_\phi^{-1}(\bz)]_1$ is a Lipschitz continuous function over domain $[0,1]^d$.  Additionally, because $p(\bx) \geq p_{\min}$, if follows that $p(\bz) \geq p_{\min}/L^d$, where $L$ is the upper Lipschitz constant associated with $g$.  From here, it has already been established that the minimax squared estimation error for such an $h$, with input distribution bounded away from zero almost surely on a compact domain, is $\Theta\left(n^{-2/d} \right)$.  Additionally, searching for an unknown $i$ incurs a modest additional cost such that the overall rate is the same. \myendofproof

\end{document}